\theoremstyle{plain}
\newtheorem{lemma}{Lemma}
\newtheorem{theorem}{Theorem}
\newtheorem{proposition}{Proposition}
\newtheorem{corollary}{Corollary}
\theoremstyle{definition}
\newtheorem{definition}{Definition}
\newtheorem{remark}{Remark}
\newcommand*{\addFileDependency}[1]{% argument=file name and extension
  \typeout{(#1)}
  \@addtofilelist{#1}
  \IfFileExists{#1}{}{\typeout{No file #1.}}
}
\newcommand{\Sec}[1]{Section~\ref{sec:#1}}
\newcommand{\Eq}[1]{Eq.~(\ref{eq:#1})}
\newcommand{\Fig}[1]{Figure~\ref{fig:#1}}
\newcommand{\Def}[1]{Definition~\ref{def:#1}}
\newcommand{\Theorem}[1]{Theorem~\ref{th:#1}}
\newcommand{\Proposition}[1]{Proposition~\ref{prop:#1}}
\newcommand{\Corollary}[1]{Corollary~\ref{cor:#1}}
\newcommand{\Lemma}[1]{Lemma~\ref{lem:#1}}
\newcommand{\Appendix}[1]{Appendix~\ref{appendix:#1}}
\newcommand{\Table}[1]{Table~\ref{tab:#1}}
\newcommand{\BEAS}{\begin{eqnarray*}}
\newcommand{\EEAS}{\end{eqnarray*}}
\newcommand{\BEA}{\begin{eqnarray}}
\newcommand{\EEA}{\end{eqnarray}}
\newcommand{\BEQ}{\begin{equation}}
\newcommand{\EEQ}{\end{equation}}
\newcommand{\BIT}{\begin{itemize}}
\newcommand{\EIT}{\end{itemize}}
\newcommand{\BNUM}{\begin{enumerate}}
\newcommand{\ENUM}{\end{enumerate}}
\newcommand{\BA}{\begin{array}}
\newcommand{\EA}{\end{array}}
\newcommand{\one}       {\mathds{1}}
\def \R{{\mathbb R}}
\def \N{{\mathbb N}}
\def \F{{\mathcal F}}
\def \X{{\mathcal X}}
\def \Y{{\mathcal Y}}
\def \G{{G}}
\def \fun{\mathfrak{F}}
\def \Co{\mathcal{C}}
\def \Graph{\mathbf{Graph}}
\def \Node{\mathbf{Neighborhood}}
\newcommand{\ie}   			{i.e.~}
\newcommand{\eg}   			{e.g.~}
\newcommand{\st}   			{\mbox{s.t.~}}
\newcommand{\wrt}   		{w.r.t.~}
\newcommand{\set}[1]{\llbracket #1\rrbracket}
\newcommand{\suchthat}{:}
\newcommand{\MethodName}{Colored Local Iterative Procedure\xspace}
\newcommand{\MN}{CLIP\xspace}
\title{Coloring graph neural networks for node \\ disambiguation}
\author{
George Dasoulas \\
\'Ecole Polytechnique \\
Noah's Ark Lab, Huawei \And
Ludovic Dos Santos \\
Noah's Ark Lab, Huawei \And
Kevin Scaman \\
Noah's Ark Lab, Huawei \And
Aladin Virmaux \\
Noah's Ark Lab, Huawei
}
\begin{document}

\maketitle

\begin{abstract}
In this paper, we show that a simple coloring scheme can improve, both theoretically and empirically, the expressive power of Message Passing Neural Networks (MPNNs). More specifically, we introduce a graph neural network called \MethodName (\MN) that uses colors to disambiguate identical node attributes, and show that this representation is a \emph{universal approximator} of continuous functions on graphs with node attributes.
%extends the Message Passing Neural Network framework in order to provably gain greater expressive power. We hence extend the well-known universal approximation theorem of feedforward neural networks to graph representations, by showing \MN may be turned into a universal approximation for graph continuous functions.
Our method relies on \emph{separability}, a key topological characteristic that allows to extend well-chosen neural networks into universal representations.
Finally, we show experimentally that \MN is capable of capturing structural characteristics that traditional MPNNs fail to distinguish, while being state-of-the-art on benchmark graph classification datasets.
\end{abstract}

%=============================================================
\section{Introduction}
%=============================================================
Learning good representations is seen by many machine learning researchers as the main reason behind the tremendous successes of the field in recent years \citep{bengio2013representation}. In image analysis~\citep{Krizhevsky:2012:ICD:2999134.2999257}, natural language processing~\citep{NIPS2017_7181} or reinforcement learning~\citep{mnih2015humanlevel}, groundbreaking results rely on efficient and flexible deep learning architectures that are capable of transforming a complex input into a simple vector while retaining most of its valuable features.
%And one can think of  \emph{universality} as the strongest property for a representation to be good since it implies better expressiveness.
%
The \emph{universal approximation theorem}~\citep{cybenko1989approximation,hornik1989multilayer,Hornik:1991:ACM:109691.109700,pinkus1999approximation} provides a theoretical framework to analyze the expressive power of such architectures by proving that, under mild hypotheses, multi-layer perceptrons (MLPs) can uniformly approximate any continuous function on a compact set.
%
%For example, multi-Layer Perceptrons (MLPs) are known for being \emph{universal}
%\citep{cybenko1989approximation,hornik1989multilayer,Hornik:1991:ACM:109691.109700,pinkus1999approximation}
%in the sense that, under mild hypotheses, these architectures can uniformly approximate any continuous function on a compact set.
%In other words, it is well known that for any compact $X \subset \R^d$, the set
%of neural networks $X \rightarrow \R$ is uniformly dense in $\mathcal{C}(X, \R)$.
This result %was an important stepping stone for deep learning as it 
provided a first theoretical justification of the strong approximation capabilities of neural networks, and was the starting point of more refined analyses providing valuable insights into the generalization capabilities of these architectures \citep{baum1989size,geman1992neural,saxe2013exact,bartlett2018gradient}.

% \begin{figure}[h]
%         \centering
% 		\includegraphics[width=0.7\linewidth]{}
% 		\caption{Schematic representation of \MN.}\label{fig:CLIP}
% \end{figure}

%Machine learning pipelines usually rely on compressing the input's useful information into compact low dimensional vectors, that may then be used for the desired machine learning task. While this pre-processing necessarily leads to information losses, good vector representations should be sufficiently expressive to retain the valuable characteristics for the learning task at hand (\eg edges for image classification or connectivity for graph clustering).

%Graph representation tackles the hard problem of mapping high dimensional objects into simple vectors through local aggregation steps in order to perform machine learning tasks such as regression or classification.
Despite a large literature and state-of-the-art performance on benchmark graph classification datasets, graph neural networks yet lack a similar theoretical foundation~\citep{DBLP:journals/corr/abs-1810-00826}. Universality for these architectures is either hinted at via equivalence with approximate graph isomorphism tests ($k$-WL tests in \citealt{DBLP:journals/corr/abs-1810-00826,DBLP:journals/corr/abs-1905-11136}), or proved under restrictive assumptions (finite node attribute space in \citealt{pmlr-v97-murphy19a}).
In this paper, we introduce \MethodName (\MN), which tackles the limitations of current Message Passing Neural Networks (MPNNs) by showing, both theoretically and experimentally, that adding a simple coloring scheme can improve the flexibility and power of these graph representations.
More specifically, our contributions are:
% \begin{enumerate}
% \item We provide a precise mathematical definition for universal graph representations.
% \item We provide a general mechanism to design universal neural networks using separability.
% \item We prove that adding colors to identical node attributes make MPNNs universal.
% \item We provide \MN, the first proven universal MPNN extension.%, and prove its universality using the novel concept of \emph{separable} neural networks.
% \item We show that \MN achieves state of the art results on benchmark datasets while significantly outperforming traditional MPNNs for graph property testing. 
% \end{enumerate}
1) we provide a precise mathematical definition for universal graph representations,
2) we present a general mechanism to design universal neural networks using separability,
3) we propose a novel node coloring scheme leading to \MN, the first provably universal extension of MPNNs, %, and prove its universality using the novel concept of \emph{separable} neural networks.
4) we show that \MN achieves state of the art results on benchmark datasets while significantly outperforming traditional MPNNs as well as recent methods on graph property testing. 

The rest of the paper is organized as follows: \Sec{relworks} gives an overview of the graph representation literature and related works. \Sec{universality} provides a precise definition for universal representations, as well as a generic method to design them using \emph{separable} neural networks.
In \Sec{gnn}, we show that most state-of-the-art representations are not sufficiently expressive to be universal.
Then, using the analysis of \Sec{universality}, \Sec{graphs} provides \MN, a provably universal extension of MPNNs. %while more details about its neighborhood aggregation scheme are available in \Sec{neighborhoods}.
Finally, \Sec{exps} shows that \MN achieves state-of-the-art accuracies on benchmark graph classification taks, as well as outperforming its competitors on graph property testing problems.

%we apply this method in order to give
%universal representations of graphs with node attributes using deep learning.
%Of course our methods must contain the cost of the GI test and we discuss
%implementation and relaxations in \Sec{relaxation}.
%Finally in \Sec{neighborhoods} we discuss the different aggregation
%methods that allow to have a practical universal approximation for graphs.

%=============================================================
\section{Related works}\label{sec:relworks}
%=============================================================
%The purpose of a graph neural network is to capture complex graph characteristics into simple vector representations, and arguably
The first works investigating the use of neural networks for graphs used recurrent
neural networks to represent directed
acyclic graphs~\citep{sperduti1997supervised,frasconi1998general}.
More generic graph neural networks were later introduced by~\citet{gori2005new,scarselli2009graph},
% Most of these models attempt to gather local information from the connected nodes of the graph toward a global model.
and may be divided into two categories.
1)
\emph{Spectral methods}~\citep{bruna2013spectral,henaff2015deep,defferrard2016convolutional,kipf2016semi} that perform
convolution on the Fourier domain of the graph through the spectral decomposition of
the graph Laplacian.
%More recent works addressed the main issues of this method (\ie lack of spatial localization and high computational complexity) by using a polynomial approximation of the filter~\citep{defferrard2016convolutional,kipf2016semi}.
2)
\emph{Message passing neural
  networks}~\citep{gilmer2017neural}, sometimes simply referred to as \emph{graph neural networks}, that are based on the
aggregation of neighborhood information through a local iterative process. This category contains most state-of-the-art graph representation methods such
as~\citep{duvenaud2015convolutional,grover2016node2vec,lei2017deriving,ying2018hierarchical,verma2018graph},
DeepWalk~\citep{perozzi2014deepwalk}, graph attention
networks~\citep{velickovic2017graph}, graphSAGE~\citep{hamilton2017inductive} or GIN~\citep{DBLP:journals/corr/abs-1810-00826}.

Recently, \citep{DBLP:journals/corr/abs-1810-00826} showed that MPNNs were, at most, as expressive as the Weisfeiler-Lehman (WL) test for graph isomorphism~\citep{weisfeiler1968reduction}. This suprising result led to several works proposing MPNN extensions to improve their expressivity, and ultimately tend towards \emph{universality}~\citep{DBLP:journals/corr/abs-1905-11136,maron2018invariant,maron2019universality,pmlr-v97-murphy19a,DBLP:journals/corr/abs-1905-12560}.
However, these graph representations are either as powerful as the $k$-WL test~\citep{DBLP:journals/corr/abs-1905-11136}, or provide universal graph representations under the restrictive assumption of finite node attribute space~\citep{pmlr-v97-murphy19a}. Other recent approaches \citep{maron2019universality} implies quadratic order of tensors in the size of the considered graphs.
Some more powerfull GNNs are studied and benchmarked on real classical datasets and on graph property testing~\citep{ijcai2018-325,pmlr-v97-murphy19a,DBLP:journals/corr/abs-1905-12560}: a set of problems that classical MPNNs cannot handle.
Our work thus provides a more general and powerful result of universality, matching the original definition of~\citep{cybenko1989approximation} for MLPs.

\section{Universal representations via separability}\label{sec:universality}
%=============================================================
In this section we present the theoretical tools used to design our universal graph representation. More specifically, we show that \emph{separable} representations are sufficiently flexible to capture all relevant information about a given object, and may be extended into universal representations.
%In this section we demonstrate that \MN has enough expressive power to learn universal representations that are then capable of capturing sufficient information about the graph to tackle a given task.  
%-------------------------------------------------------------
\subsection{Notations and basic assumptions}
%-------------------------------------------------------------
Let $\X,\Y$ be two topological spaces, then $\F(\X, \Y)$ (resp. $\mathcal{C}(\X, \Y)$) denotes the space of
all functions (resp. continuous functions) from $\X$ to $\Y$.
Moreover, for any group $G$ acting on a set $\X$, $\X/G$ denotes the set of
orbits of $\X$ under the action of $G$ (see \Appendix{hausdorff} for more details).
Finally, $\|\cdot\|$ is a norm on $\R^d$, and $\mathcal{P}_n$ is the set of
all permutation matrices of size $n$. %, \ie matrices within $\{0,1\}^{n\times n}$
%such that $\forall i\in\set{1,n}$, $\sum_j P_{ij} = \sum_j P_{ji} = 1$.
%%We will denote $Gx = \{g \cdot x \suchthat x \in G\}$ the orbit of $x \in \X$
%%under the action of $G$.
%
In what follows, we assume that all the considered topological spaces are
\emph{Hausdorff} (see \eg \citep{bourbaki1998general} for an in-depth review): each pair of distinct points can be separated by two disjoint
open sets.
This assumption is rather weak (\eg all metric spaces are Hausdorff) and is
verified by most topological spaces commonly encountered in the field of
machine learning.
% The Hausdorff property is stable by finite unions and the quotient of an
% Hausdorff space by a finite group is also Hausdorff (see the appendix,
% \Proposition{orbit.space.hausdroff}).
% We refer the reader to \citep{bourbaki1998general} for a more general in-depth review of Hausdorff spaces.

%-------------------------------------------------------------
\subsection{Universal representations}
%-------------------------------------------------------------
Let $\X$ be a set of objects (\eg vectors, images,
graphs, or temporal data) to be used as input information for a machine learning task (\eg classification, regression or clustering).
In what follows, we denote as \emph{vector representation} of $\X$ a
function $f:\X\rightarrow\R^d$ that maps each element
$x\in \X$ to a $d$-dimensional vector $f(x)\in\R^d$.
A standard setting for supervised representation learning
is to define a class of vector
representations $\fun_d \subset\F(\X,\R^d)$
(\eg  convolutional neural networks for images) and use the target values
(\eg image classes) to learn a \emph{good} vector representation in light of
the supervised learning task (\ie one vector representation $f\in\fun_d$
that leads to a good accuracy on the learning task).
In order to present more general results, we will consider neural network architectures that can output vectors of any size, \ie $\fun \subset\cup_{d\in\N^*}\F(\X,\R^d)$, and will denote $\fun_d = \fun \cap \F(\X,\R^d)$ the set of $d$-dimensional vector representations of $\fun$.
A natural characteristic to ask from the class $\fun$
is to be generic enough to approximate any vector representation, a notion that
we will denote as \emph{universal representation}~\citep{hornik1989multilayer}.

\begin{definition}\label{def:universality}
A class of vector representations
$\fun \subset\cup_{d\in\N^*}\F(\X,\R^d)$ is called a
\emph{universal representation} of $\X$ if
for any compact subset $K\subset \X$ and $d\in\N^*$, $\F$ is uniformly
dense in $\mathcal{C}(K,\R^d)$.
\end{definition}

In other words, $\fun$ is a universal representation of a normed space $\X$ if
and only if, for any continuous function $\phi:\X\rightarrow\R^d$, any compact
$K\subset\X$ and any $\varepsilon > 0$, there exists $f\in\fun$ such
that
\BEQ
\forall x \in K,\ \|\phi(x) - f(x)\| \leq \varepsilon\,.
\EEQ
One of the most fundamental theorems of neural network theory states that one hidden layer
MLPs are universal representations of the $m$-dimensional vector space $\R^m$.
\begin{theorem}[{\citealp[Theorem 3.1]{pinkus1999approximation}}]
  \label{th:universality.mlp}
  Let $\varphi: \R \rightarrow \R$ be a continuous non polynomial activation function.
  For any compact $K \subset \R^m$ and $d\in\N^*$, two layers neural networks with activation
  $\varphi$ are uniformly dense in the set $\Co(K, \R^d)$.
\end{theorem}

However, for graphs and structured objects, universal representations are hard to obtain due to
their complex structure and invariance to a group of transformations (\eg permutations of the node labels).
% Fortunately, one can extend a generalized version of the Stone-Weierstrass theorem in order to design universal representations for such objects.
% The key topological property we are looking for is \emph{separability}.
We show in this paper that a key topological property, \emph{separability}, may lead to universal representations of those structures.

%-------------------------------------------------------------
\subsection{Separability is (almost) all you need}
%-------------------------------------------------------------
Loosely speaking, universal representations can approximate any vector-valued function. It is thus natural to require that these representations are \emph{expressive} enough to separate each pair of dissimilar elements of $\X$.
\begin{definition}[Separability]
A set of functions $\fun \subset \F(\X, \Y)$ is said to
\emph{separate} points of $\X$ if for every pair of distinct points $x$ and $y$, there
exists $f \in \fun$ such that $f(x) \neq f(y)$.
\end{definition}

%We will also say that a representation of $\X$ is separable if it separates  the points of $\X$.
For a class of vector representations $\fun \subset\cup_{d\in\N^*}\F(\X,\R^d)$, we will say that $\fun$ is \emph{separable} if its $1$-dimensional representations $\fun_1$ separates points of $\X$. Separability is rather weak, %($\X$ metric space is sufficient)
as we only require the existence of different outputs for every pair of inputs. Unsurprisingly, we now show that it is a necessary condition for universality (see \Appendix{separability} for all the detailed proofs).

\begin{proposition}\label{prop:necessary_condition}
  Let $\fun$ be a universal representation of $\X$, then $\fun_1$
  separates points of $\X$.
\end{proposition}

While separability is necessary for universal representations, it is also key
to designing neural network architectures that can be extended into
universal representations.
More specifically, under technical assumptions, separable representations can be composed
with a universal representation of $\R^d$ (such as MLPs) to become universal.

%\begin{definition}
%A set of functions $\fun$ is \emph{stable by concatenation} if
%for all $f, g \in \fun$ with $f: \X
%\rightarrow \R^{d_f}$ and $g: \X \rightarrow \R^{d_g}$, then $h: \X
%\rightarrow \R^{d_f + d_g}$ with $h(x) = (f(x), g(x)) \in \fun$.
%\end{definition}

\begin{figure}[t]
\begin{minipage}[b]{0.55\linewidth}
  \centering
    \begin{tikzpicture}[scale=0.43,x = 1.1cm]
      % \draw (0,0.2) rectangle (1,2.8);
      % \draw (3,0.2) rectangle (4,3);
      % \draw (6,0.2) rectangle (7,2.5);
      %
      % \draw (0,-3.3) rectangle (1,-.7);
      % \draw (3,-3.5) rectangle (4,-.7);
      % \draw (6,-3) rectangle (7,-.7);
    
      % \draw [fill=black!20] (0,-.7) rectangle (1,0.2);
      % \draw [fill=black!20] (3,-.7) rectangle (4,0.2);
      % \draw [fill=black!20] (6,-.7) rectangle (7,0.2);
    
      \foreach \x in {0,...,3}
        \foreach \y in {0,...,6}
          \draw (0.5, \x*2.6/3 + 0.2) -- (3.5, \y*2.8/6 + 0.1);
    
      \foreach \x in {0,...,6}
        \foreach \y in {0,...,2}
          \draw (3.5, \x*2.8/6 + 0.1) -- (6.5, \y*2.3/2 + 0.4);
    
      \foreach \x in {0,...,3}
        \foreach \y in {0,...,5}
          \draw (0.5, -\x*2.6/3 - 0.8) -- (3.5, -\y*2.8/5 - 0.7);
    
      \foreach \x in {0,...,5}
        \foreach \y in {0,...,1}
          \draw (3.5, -\x*2.8/5 - 0.7) -- (6.5, -\y*2.3/1 - 1);
    
      % \draw [->] (-0.3, 1.4) -- (7.2, 1.4);
      % \draw [->] (-0.3, -2) -- (7.2, -2);
    
      % \draw [->, very thick] (-1, -0.2) -- (8.2, -0.2);
    
      % \draw [dashed] (-0.4, -0.1) rectangle (7.3, 3.3);
      % \draw [dashed] (-0.4, -3.7) rectangle (7.3, -0.4);
    
      \draw [thick, dashed, blue] (0,-3.7) rectangle (1,3.1);
      \draw [thick, dashed, blue] (3,-3.7) rectangle (4,3.1);
      \draw [thick, dashed, blue] (6,-3.7) rectangle (7,3.1);
    
      \node [align=center]  at (0.5,-4.5) {input};
      \node [align=center]  at (3.5,-4.5) {hidden layer};
      \node [align=center]  at (6.5,-4.5) {output};
    
      \node (f) at (-1.2, 1.4) {\large $f$};
      \node (g) at (-1.2, -2.1) {\large $g$};
    
      \draw [decorate,decoration={brace,amplitude=10pt,mirror}] (8,-3.8) -- (8,3.3) node[midway,right,xshift=15pt] {$(f, g)$};
      % \node at (-1,-1.8) [rotate=90] {\large input $f$};
      % \node at (-1,1.4) [rotate=90] {\large input $g$};
    \end{tikzpicture}
    \caption{Concatenation of two MLPs $f$ and $g$.}
    \label{fig:concatenation.mlp}
\end{minipage}
\hfill
\begin{minipage}[b]{0.44\linewidth}
    \centering
	\includegraphics[width=1\linewidth]{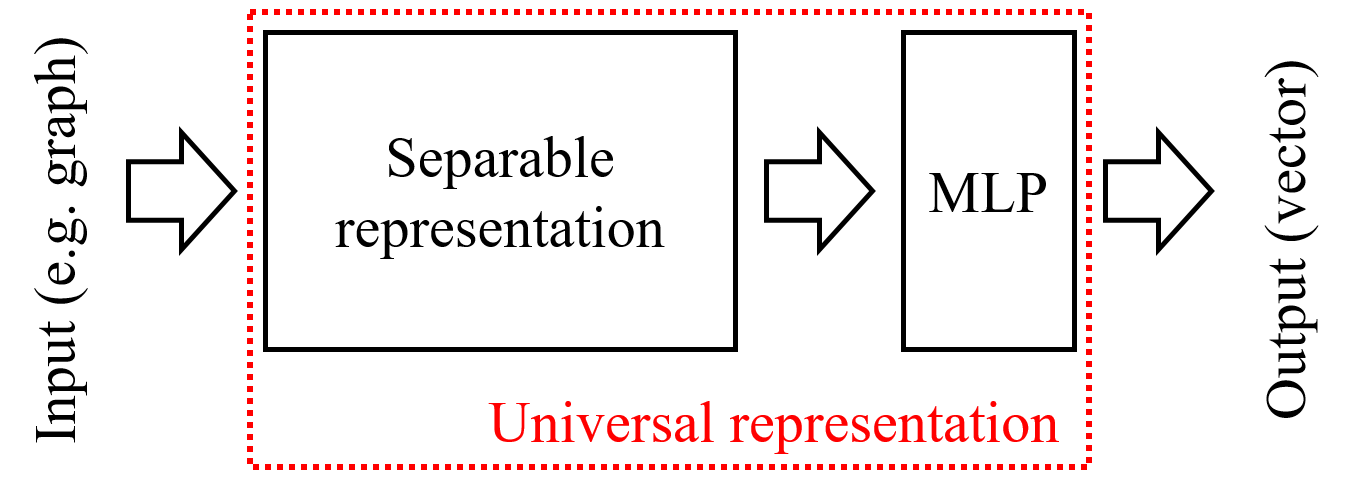}
	\caption{Universal representations can easily be created by combining a separable representation with an MLP.}\label{fig:sepMLP}
\end{minipage}
\end{figure}
%The main theorem of the paper is the following: it lifts
%universal approximation of $\R^d$ and separability to any object space.
\begin{theorem}
\label{th:subringpsi}
For all $d\geq 0$, let $\mathcal{M}_d$ be a universal approximation of $\R^d$.
Let $\fun$ be a class of vector representations of $\X$ such that:
\BNUM[(i)]
\item \textbf{Continuity:} every $f\in\fun$ is continuous,
\item \textbf{Stability by concatenation:} for all $f, g \in \fun$, $x\mapsto(f(x), g(x)) \in \fun$,
\item \textbf{Separability:} $\fun_1$ separates points of $\X$.
\ENUM
Then $\{\psi\circ f~:~\exists d\geq 1 \mbox{ \st } \psi \in \mathcal{M}_d, f \in \fun\}$ is a universal representation of $\X$.
\end{theorem}

Stability by concatenation is verified by most neural networks architectures, as illustrated for MLPs in \Fig{concatenation.mlp}.
The proof of \Theorem{subringpsi} relies on the Stone-Weierstrass theorem (see \eg \citealp[Theorem 7.32]{Rudin:1987:RCA:26851}) whose assumptions are continuity, separability, and the fact that the class of functions is an algebra.
Fortunately, composing a separable and concatenable representation with a universal representation automatically leads to an algebra, and thus the applicability of the Stone-Weierstrass theorem and the desired result.
A complete derivation is available in \Appendix{separability}. %\Appendix{proof.subringpsi}.
%In other words, a continuous and separable vector representation only needs to be the input of a universal representation of vectors (\eg Multi Layer Perceptrons, or \emph{MLPs}) to be universal.
Since MLPs are universal representations of $\R^d$, \Theorem{subringpsi} implies
a convenient way to design universal representations of more complex object spaces:
create a separable representation and compose it with
a simple MLP (see \Fig{sepMLP}).

\begin{corollary}\label{cor:sepMLP}
A continuous, concatenable and separable representation of $\X$ composed with an MLP is universal.
\end{corollary}

Note that many neural networks of the deep learning literature have this two
steps structure, including classical image CNNs such as
AlexNet~\citep{Krizhevsky:2012:ICD:2999134.2999257} or
Inception~\citep{szegedy2016rethinking}.
In this paper, we use \Corollary{sepMLP} to design universal graph and neighborhood representations, although the method is much more generic and may be applied to other objects.

%=============================================================
\section{Limitations of existing representations}\label{sec:gnn}
%=============================================================
In this section, we first provide a proper definition for graphs with node attributes, and then show that message passing neural networks are not sufficiently expressive to be universal.

%-------------------------------------------------------------
\subsection{Graphs with node attributes}
%-------------------------------------------------------------
Consider a dataset of $n$ interacting objects (\eg users of a social
network) in which each object $i\in\set{1,n}$ has a vector attribute $v_i\in\R^m$ and
is a node in an undirected graph $G$ with adjacency matrix $A\in\R^{n\times n}$.
\begin{definition}\label{def:graphs}
The space of graphs of size $n$ with $m$-dimensional node attributes is the quotient space
\BEQ
\Graph_{m,n} = \left\{(v,A)\in\R^{n\times m}\times\R^{n\times n}\right\}/\mathcal{P}_n\,,
\EEQ
where $A$ is the adjacency matrix of the graph, $v$ contains the $m$-dimensional representation of each node in the graph and the set of permutations matrices $\mathcal{P}_n$ is acting on $(v,A)$ by
\BEQ
\label{eq:graph.perm.invariant}
\forall P\in\mathcal{P}_n,\quad P\cdot(v,A) = (Pv, PAP^\top)\,.
\EEQ
\end{definition}
Moreover, we limit ourselves to graphs of maximum size $n_{\max}$, where $n_{\max}$ is a large integer.
This allows us to consider functions on graphs of different sizes without obtaining infinite dimensional spaces and infinitely complex functions that would be impossible to learn via a finite number of samples.
We thus define
% \BEQ\label{eq:graph_m}
$\Graph_m = \bigcup_{n\leq n_{\max}}\Graph_{m,n}$.
% \EEQ
%
More details on the technical topological aspects
of the definition are available in \Appendix{hausdorff}, as well as a proof that $\Graph_m$ is Hausdorff.

%-------------------------------------------------------------
\subsection{Message passing neural networks} %Neighborhood aggregation strategies?
%-------------------------------------------------------------
A common method for designing graph representations is to rely on local iterative procedures.
Following the notations of \citet{DBLP:journals/corr/abs-1810-00826},
a \emph{message passing neural network} (MPNN)~\citep{gilmer2017neural} is made of three consecutive phases
that will create intermediate node representations $x_{i,t}$ for each node
$i\in\set{1,n}$ and a final graph representation $x_\G$ as described by the following procedure:
1) \textbf{Initialization:} All node representations are initialized with their node attributes $x_{i,0} = v_i$.
2) \textbf{Aggregation and combination:} $T$ local iterative steps are performed in order to capture larger and larger structural characteristics of the graph.
% More specifically, each step summarizes neighborhood information of each node and combines it with the node's own representation
% \BEQ
% x_{i,t+1} = \textsc{Agg\&Comb}^{(t)}(x_{i,t}, \{x_{j,t}\}_{j\in\mathcal{N}_i})\,,
% \EEQ
% where $\textsc{AggComb}$ is permutation invariant \wrt its second input, and $\mathcal{N}_i$ is the set of neighbors of node $i$.
3) \textbf{Readout:} This step combines all final node representations into a single graph representation: $x_\G = \textsc{Readout}(\{x_{i,T}\}_{i\in\set{1,n}})$,
where $\textsc{Readout}$ is permutation invariant.

%-------------------------------------------------------------
%\subsection{Limitations of MPNNs}\label{sec:lim_MPNNs}
%-------------------------------------------------------------
Unfortunately, while MPNNs are very efficient in practice and proven to be as expressive as the Weisfeiler-Lehman algorithm \citep{weisfeiler1968reduction, DBLP:journals/corr/abs-1810-00826}, they are not sufficiently expressive to construct isomorphism tests or separate all graphs (for example, consider $k$-regular graphs without node attributes, for which a small calculation shows that any MPNN representation will only depend on the number of nodes and degree $k$~\citep{DBLP:journals/corr/abs-1810-00826}).
As a direct application of \Proposition{necessary_condition}, MPNNs are thus not expressive enough to create universal representations.

\section{Extending MPNNs using a simple coloring scheme}\label{sec:graphs}
%=============================================================
In this section, we present \MethodName (\MN), an extension of MPNNs using colors to differentiate identical node attributes, that is able to capture more complex structural graph characteristics than traditional MPNNs. This is proved theoretically through a universal approximation theorem in \Sec{unversalCLIP} and experimentally in \Sec{exps}.
\MN is based on three consecutive steps: 1) graphs are colored with several different colorings, 2) a neighborhood aggregation scheme provides a vector representation for each colored graph, 3) all vector representations are combined to provide a final output vector.
We now provide more information on the coloring scheme.

%-------------------------------------------------------------
\subsection{Colors to differentiate nodes}
%-------------------------------------------------------------
In order to distinguish non-isomorphic graphs, our approach consists in coloring nodes of the graph with identical attributes.
This idea is inspired by classical graph isomorphism algorithms that use colors to distinguish nodes~\citep{mckay1981practical}, and may be viewed as an extension of one-hot encodings used for graphs without node attributes~\citep{DBLP:journals/corr/abs-1810-00826}.

For any $k\in\N$, let $C_k$ be a finite set of $k$ colors. These colors may be represented as %integers ($C_k = \set{1,k}$), 2d unit vectors ($C_k = \left\{(\cos(2\pi i / k), \sin(2\pi i / k))\right\}_{i\in\set{1,k}}$),
one-hot encodings ($C_k$ is the natural basis of $\R^k$) or more generally any finite set of $k$ elements.
At initialization, we first partition the nodes into groups of identical attributes $V_1,...,V_K\subset\set{1,n}$.
% We first group nodes with identical attributes into the partition $V_1,...,V_K\subset\set{1,n}$.
Then, for a subset $V_k$ of size $|V_k|$, we give to each of its nodes a distinct color from $C_k$ (hence a subset of size $|V_k|$).
For example, \Fig{coloring} shows two colorings of the same graph, which is decomposed in three groups $V_1$, $V_2$ and $V_3$ containing nodes with attributes $a$, $b$ and $c$ respectively. Since $V_1$ contains only two nodes, a coloring of the graph will attribute two colors ($(1,0)$ and $(0,1)$, depicted as \emph{blue} and \emph{red}) to these nodes.
More precisely, the set of colorings $\Co(v,A)$ of a graph $G = (v,A)$ are defined as
\BEQ\label{eq:coloring}
\Co(v,A) = \Big\{(c_1,...,c_n)~:~\forall k\in\set{1,K}, (c_i)_{i\in V_k} \mbox{ is a permutation of }C_{|V_k|}\Big\}.
\EEQ

%We first cluster the nodes with identical attributes and then assign to each node a
%color that must be different that the ones already given within a same cluster
%as depicted in \Fig{coloring}.
%Colors will be encoded as $\{0, 1\}^n$ one-hot vectors and are ordered with
%respect to this choice by lexicographical ordering (we will shot that the
%learning algorithm does not depend on this order).
%In a few words, we forcibly separate nodes with same
%labels using this coloring trick. More formally, we first define clusters of
%labels and one-hot vectors of length $n$
%\begin{align*}
  %C_k = \{i \suchthat v_i = v_k\},\\
  %v_{ij} \in \R^n = \left\{\begin{array}{cc}
    %1 & \textrm{ if } i = j\\
    %0 & \textrm{otherwise}\,,
  %\end{array}\right.
%\end{align*}
%and we can finally define the set of colorings as follow:
%\begin{align}
  %\label{eq:coloring}
  %\Co(v, A) &= \left\{c_i \in \{0, 1\}^n \suchthat
  %\begin{array}{l}
    %\forall i,\ c_i \in \{v_j \suchthat j \leq |C_i|\}\\
    %\forall i,\ \sum_j c_{ij} = 1,\\
    %\forall (i, j),\ v_i = v_j \implies c_i \neq c_j\\
  %\end{array}
  %\right\}\,.
%\end{align}

\begin{figure}[t]
  \centering
  %\begin{tikzpicture}[
    %vert/.style={circle, draw, inner sep=3pt},
    %cluster/.style={ellipse, inner sep=1pt, minimum height=40pt}]
    %\node[vert, fill=red] (n1) at (0,1) {};
    %\node[vert, fill=blue] (n2) at (1,3.2) {};
    %\node[vert, fill=green] (n3) at (2.1,0) {};
    %\node[vert, fill=red] (n4) at (2.7,2) {};
    %\node[vert, fill=blue] (n6) at (4,1) {};
    %\node[vert, fill=red] (n7) at (4.7,3) {};
%
    %\draw[-] (n1) -- (n4);
    %\draw[-] (n2) -- (n4);
    %\draw[-] (n3) -- (n4);
    %\draw[-] (n3) -- (n7);
    %\draw[-] (n3) -- (n6);
    %\draw[-] (n6) -- (n7);
    %\draw[-] (n2) -- (n7);
%
    %\begin{pgfonlayer}{background}
      %\node[cluster, fit=(n1)(n2), fill=teal!20] {};
      %A
      %\node[cluster, fit=(n3)(n4)(n6), fill=orange!20] {};
      %\node[cluster, fit=(n7), fill=magenta!20] {};
    %\end{pgfonlayer}
  %\end{tikzpicture}
  %\begin{tikzpicture}[
    %vert/.style={circle, draw},
    %cluster/.style={ellipse, inner sep=1pt, minimum height=40pt}]
    %\node[vert, fill=blue] (n1) at (0,1) {};
    %\node[vert, fill=red] (n2) at (1,3.2) {};
    %\node[vert, fill=blue] (n3) at (2.1,0) {};
    %\node[vert, fill=red] (n4) at (2.7,2) {};
    %\node[vert, fill=green] (n6) at (4,1) {};
    %\node[vert, fill=red] (n7) at (4.7,3) {};
%
    %\draw[-] (n1) -- (n4);
    %\draw[-] (n2) -- (n4);
    %\draw[-] (n3) -- (n4);
    %\draw[-] (n3) -- (n7);
    %\draw[-] (n3) -- (n6);
    %\draw[-] (n6) -- (n7);
    %\draw[-] (n2) -- (n7);
%
    %\begin{pgfonlayer}{background}
      %\node[cluster, fit=(n1)(n2), fill=teal!20] {};
      %A
      %\node[cluster, fit=(n3)(n4)(n6), fill=orange!20] {};
      %\node[cluster, fit=(n7), fill=magenta!20] {};
    %\end{pgfonlayer}
  %\end{tikzpicture}
		\includegraphics[width=0.5\linewidth]{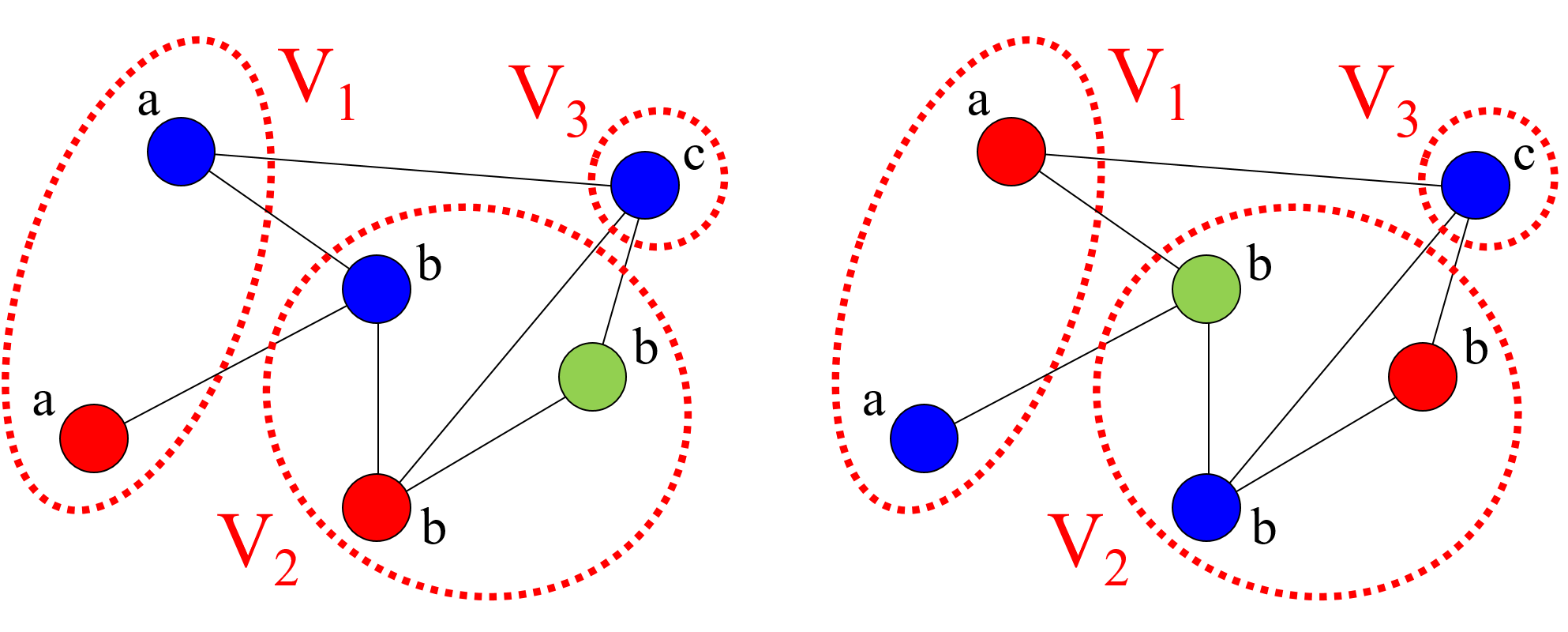}
  \caption{Example of two valid colorings of the same attributed graph. Note that each $V_k$ contains nodes with identical attributes.}
  \label{fig:coloring}
\end{figure}

%-------------------------------------------------------------
\subsection{The \MN algorithm}
%-------------------------------------------------------------
In the \MN algorithm, we add a coloring scheme to an MPNN in order to distinguish identical node attributes.
This is achieved by modifying the initialization and readout phases of MPNNs as follows.
\BNUM
\item \textbf{Colored initialization:} We first select a set $\mathcal{C}_k \subseteq \mathcal{C}(v, A)$ of $k$ distinct colorings uniformly at random (see \Eq{coloring}). Then, for each coloring $c\in \mathcal{C}_k$, node representations are initialized with their node attributes concatenated with their color: $x^c_{i,0} = (v_i, c_i)$.
\item \textbf{Aggregation and combination:} This step is performed for all
  colorings $c\in \mathcal{C}_k$ using a universal set representation as the aggregation function:
  % closely related to DeepSet~\citep{DBLP:journals/corr/ZaheerKRPSS17}
%$x^c_{i,t+1} = \textsc{NeighborNet}^{(t)}(x^c_{i,t}, \{x^c_{j,t}\}_{j\in\mathcal{N}_i})$.
$x^c_{i,t+1} = \psi^{(t)} \big(x^c_{i,t},\ \sum_{j\in\mathcal{N}_i} \varphi^{(t)}(x^c_{j,t}) \big)$,
where $\psi$ and $\varphi$ are MLPs with continuous non-polynomial activation functions and $\psi(x,y)$ denotes the result of $\psi$ applied to the concatenation of $x$ and $y$.
The aggregation scheme we propose is closely related to DeepSet~\citep{DBLP:journals/corr/ZaheerKRPSS17}, and a direct application of \Corollary{sepMLP} proves the universality of our architecture. More details, as well as the proof of universality, are available in \Appendix{neighbornet}.
\item \textbf{Colored readout:} This step performs a maximum over all possible colorings in order to obtain a final \emph{coloring-independent} graph representation.
In order to keep the stability by concatenation, the maximum is taken coefficient-wise
\BEQ\label{eq:readoutCLIP}
x_\G = \psi\left(\max_{c\in \mathcal{C}_k}\sum_{i=1}^n x^c_{i,T}\right)\,,
\EEQ
where $\psi$ is an MLP with continuous non polynomial activation functions.
\ENUM

We treat $k$ as a hyper-parameter of the algorithm and call $k$-\MN (resp. $\infty$-\MN) the algorithm using $k$ colorings (resp. all colorings, \ie $k=|\Co(v,A)|$).
Note that, while our focus is graphs with node attributes, the approach used for \MN is easily extendable to similar data structures such as directed or weighted graphs with node attributes, graphs with node labels, graphs with edge attributes or graphs with additional attributes at the graph level.

%\BEQ
%\mathcal{F} = \left\{
    %f: \Graph_m \rightarrow \R \suchthat \exists (\phi, \psi),\,
    %\ f(v, A) = \psi\left(\max_{c\in\mathcal{C}(v,A)}\sum_{i=1}^n \phi\left(v_i, c_i, \left\{(v_j,c_j)\right\}_{j\in\mathcal{N}_i}\right)\right)\right\},
%\EEQ
%where $\mathcal{C}(v,A)$ is the set of all colorings of $(v,A)$.

%-------------------------------------------------------------
\subsection{Universal representation theorem}\label{sec:unversalCLIP}
%-------------------------------------------------------------
As the colorings are chosen at random, the \MN representation is itself random as soon as $k < |\Co(v,A)|$, and the number of colorings $k$ will impact the variance of the representation.
However, $\infty$-\MN is deterministic and permutation invariant, as MPNNs are permutation invariant. The separability is less trivial and is ensured by the coloring scheme.

%\begin{lemma}\label{lem:CN_sep}
%The $\infty$-\MN algorithm with one local iteration ($T=1$) is a separable, concatenable and continuous representation.
%\end{lemma}

\begin{theorem}\label{th:CN_sep}
The $\infty$-\MN algorithm with one local iteration ($T=1$) is a universal representation of the space $\Graph_m$ of graphs with node attributes.
\end{theorem}
%\begin{proof}
%From \Lemma{CN_sep}, \MN is continuous, concatenable and separable. Moreover, its last computation is achieved by an MLP (\ie $\psi$ in \Eq{readoutCLIP}). Thus, \Corollary{sepMLP} implies universality.
%\end{proof}
The proof of \Theorem{CN_sep} relies on showing that $\infty$-\MN is separable and applying \Corollary{sepMLP}. This is achieved by fixing a coloring on one graph and identifying all nodes and edges of the second graph using the fact that all pairs $(v_i,c_i)$ are dissimilar (see \Appendix{universality}).
Similarly to the case of MLPs, only one local iteration is necessary to ensure universality of the representation.
This rather counter-intuitive result is due to the fact that all nodes can be identified by their color, and the readout function can aggregate all the structural information in a complex and non-trivial way.
However, as for MLPs, one may expect poor generalization capabilities for \MN with only one local iteration, and deeper networks may allow for more complex representations and better generalization. This point is addressed in the experiments of \Sec{exps}.
Moreover, $\infty$-\MN may be slow in practice due to a large number of colorings, and reducing $k$ will speed-up the computation. Fortunately, while $k$-\MN is random, a similar universality theorem still holds even for $k=1$.

\begin{theorem}\label{th:1CLIP}
The $1$-\MN algorithm with one local iteration ($T=1$) is a random representation whose expectation is a universal representation of the space $\Graph_m$ of graphs with node attributes.
\end{theorem}

The proof of \Theorem{1CLIP} relies on using $\infty$-CLIP on the augmented node attributes $v_i'=(v_i,c_i)$. As all node attributes are, by design, different, the max over all colorings in \Eq{readoutCLIP} disappears and, for any coloring, $1$-\MN returns an $\varepsilon$-approximation of the target function (see \Appendix{universality}).

\begin{remark}
Note that the variance of the representation may be reduced by averaging over multiple samples. Moreover, the proof of \Theorem{1CLIP} shows that the variance can be reduced to an arbitrary precision given enough training epochs, although this may lead to very large training times in practice.
\end{remark}

%-------------------------------------------------------------
\subsection{Computational complexity}\label{sec:relaxation}
%-------------------------------------------------------------

As the local iterative steps are performed $T$ times on each node and the complexity of the aggregation depends on the number of neighbors of the considered node, the complexity is proportional to the number of edges of the graph $E$ and the number of steps $T$. Moreover, \MN performs this iterative aggregation for each coloring, and its complexity is also proportional to the number of chosen colorings $k = |\mathcal{C}_k|$. Hence the complexity of the algorithm is in $O(kET)$.
% When this is prohibitive, several relaxations are possible.
% \BIT
% \item \textbf{Edge randomization:} when neighborhoods are too large, one may sample a fixed number of neighbors of each node at random and apply the aggregation scheme on these instead of the whole neighborhood. This relaxation replaces the number of edges $E$ by the number of nodes $n$ in the complexity.
% \item \textbf{Color randomization:} when many node attributes are similar, the number of colorings $\left|\Co(v,A)\right|$ may become extremely large (see \Eq{colorings}).
%   In order to reduce the computational complexity, one may sample a fixed number $c\in\N^*$ of colorings in $\Co(v,A)$ at random. This removes the dependency on the number of colorings $\left|\Co(v,A)\right|$ in the complexity, and is a relaxation that we performed in our experiments in \Sec{exps} with the denomiation $c$-\MN.
% \EIT

Note that the number of all possible colorings for a given graph depends exponentially in the size of the groups $V_1,...,V_K$,
\BEQ\label{eq:colorings}
\left|\Co(v,A)\right| = \prod_{k=1}^K{|V_k|!}\,,
\EEQ
and thus $\infty$-\MN is practical only when most node attributes are dissimilar.
%the method is thus better suited to scenarios in which most node attributes are dissimilar. %When we choose $C_k = \Co(v, A)$ we name \emph{full coloring} the derived algorithm and we prove in the latter that full coloring \MN is a universal graph representation.
% While the complexity of \MN may be exponential in the size of the graph due to an exponential number of colorings (\eg when all node attributes are the same, see \Eq{colorings}), we argue that
This worst case exponential dependency in the number of nodes can hardly be avoided for universal representations. Indeed, a universal graph representation should also be able to solve the graph isomorphism problem. Despite the existence of polynomial time algorithms for a broad class of
graphs~\citep{LUKS198242,bodlaender1990polynomial}, graph isomorphism is
still quasi-polynomial in general~\citep{babai2016graph}. As a result, creating a universal graph representation with polynomial complexity for all possible graphs and functions to approximate is highly unlikely, as it would also induce a graph isomorphism test of polynomial complexity and thus solve a very hard and long standing open problem of theoretical computer science.

% Idea of relaxation: limit to $k$-hop neighborhood for all coloring
% permutations.

%=============================================================
\section{Experiments}\label{sec:exps}
%=============================================================
In this section we show empirically the practical efficiency of \MN and its relaxation. We run two sets of experiments to compare \MN w.r.t. state-of-the-art methods in supervised learning settings: i) on 5 real-world graph classification datasets and ii) on 4 synthetic datasets to distinguish structural graph properties and isomorphism.
Both experiments follow the same experimental protocol as described in \citet{DBLP:journals/corr/abs-1810-00826}: $10$-fold cross validation with grid search hyper-parameter optimization. More details on the experimental setup are provided in \Appendix{exps}.

% %-------------------------------------------------------------
% \subsection{Datasets, baselines and protocol}
% %-------------------------------------------------------------
%-------------------------------------------------------------
\subsection{Classical benchmark datasets}\label{sec:bench_exps}

We performed experiments on five benchmark datasets extracted from standard social networks (IMDBb and IMDBm) and bio-informatics databases (MUTAG, PROTEINS and PTC). All dataset characteristics (\eg size, classes), as well as the experimental setup, are available in \Appendix{exps}.
% The datasets are respectively composed of $1000$ (IMDBb), $1500$ (IMDBm), $188$ (MUTAG), $1113$ (PROTEINS) and $344$ (PTC) graphs.
Following standard practices for graph classification on these datasets, we use one-hot encodings of node degrees as node attributes for IMDBb and IMDBm~\citep{DBLP:journals/corr/abs-1810-00826},
%Bioinformatic graphs have node attributes whereas social graphs do not. Thus, in the case of social graphs, we use a standard one hot encoding of node degrees as node attributes.
and perform single-label multi-class classification on all datasets.
%
%The datasets are described below. Statistics for the datasets are provided in Table \ref{tab:datasets}.
%
We compared \MN with six state-of-the-art baseline algorithms:
1) \textbf{WL:} Weisfeiler-Lehman subtree kernel~\citep{shervashidze2011weisfeiler}, 2) \textbf{AWL:} Anonymous Walk Embeddings~\citep{ivanov2018anonymous}, 3) \textbf{DCNN:} Diffusion-convolutional neural networks~\citep{atwood2016diffusion}, 4) \textbf{PS:} PATCHY-SAN~\citep{niepert2016learning}, 5) \textbf{DGCNN:} Deep Graph CNN~\citep{zhang2018end} and 6) \textbf{GIN:} Graph Isomorphism Network  ~\citep{DBLP:journals/corr/abs-1810-00826}.
WL and AWL are representative of unsupervised methods coupled with an SVM classifier, while
DCNN, PS, DGCNN and GIN are four deep learning architectures.
As the same experimental protocol as that of \citet{DBLP:journals/corr/abs-1810-00826} was used, we present their reported results on \Table{results}.

% We note that \MN is ranked first on average in 4 out of 5 classical benchmark datasets and second for the last one (see the appendix for all detailed results).

\begin{table}[ht]\centering
\caption{Classification accuracies of the compared methods on benchmark datasets. The best performer w.r.t. the mean is highlighted with an asterisk. We perform an unpaired t-test with asymptotic significance of 0.1 w.r.t. the best performer and highlight with boldface the ones for which the difference is not statistically significant. 0-\MN is the \MN architecture without any colorings.}
\label{tab:results}
\renewcommand{\tabcolsep}{3mm}
% \vskip 0.1in
%\begin{small}
\begin{tabular}{lccccc}\toprule
\textbf{Dataset}           & \textbf{PTC} & \textbf{IMDBb} & \textbf{IMDBm} & \textbf{PROTEINS} & \textbf{MUTAG} \\ \midrule \midrule
\textbf{WL}    & 59.9$\pm$4.3              & \textbf{73.8$\pm$3.9}     & \textbf{50.9$\pm$3.8}     & \textbf{75.0$\pm$3.1}     & 90.4$\pm$5.7   \\
\textbf{DCNN}  & 56.6                      & 49.1                      & 33.5                      & 61.3                      & 67.0\\
\textbf{PS}    & 60.0$\pm$4.8              & 71.0$\pm$2.2              & 45.2$\pm$2.8              & \textbf{75.9$\pm$2.8}     & \textbf{92.6$\pm$4.2} \\
\textbf{DGCNN} & 58.6                      & 70.0                      & 47.8                      & \textbf{75.5}             & 85.8\\
\textbf{AWL}   & -                         & \textbf{74.5$\pm$5.9}     & \textbf{51.5$\pm$3.6}     & -                         & 87.9$\pm$9.8\\
\textbf{GIN}   & \textbf{64.6$\pm$7.0}     & \textbf{75.1$\pm$5.1}     & \textbf{52.3$\pm$2.8}     & \textbf{76.2$\pm$2.8}     & 89.4$\pm$5.6\\ \midrule
\textbf{0-\MN}    & \textbf{65.9$\pm$4.0}    & \textbf{75.4$\pm$2.0}     & \ \ \textbf{52.5$\pm$2.6}$^*$     & \textbf{77.0$\pm$3.2}      & 90.0$\pm$5.1 \\
\textbf{\MN}   & \ \ \textbf{67.9$\pm$7.1}$^*$ & \ \ \textbf{76.0$\pm$2.7}$^*$ & \ \ \textbf{52.5$\pm$3.0}$^*$ & \ \ \textbf{77.1$\pm$4.4}$^*$ & \ \ \textbf{93.9$\pm$4.0}$^*$\\
% \textbf{Rank}  & \textbf{1\ts{st}}         & \textbf{1\ts{st}}         & \textbf{1\ts{st}}         & \textbf{1\ts{st}}         & \textbf{1\ts{st}}\\
\bottomrule
\end{tabular}
%\end{small}
\end{table}

As \Table{results} shows, \MN can achieve state-of-the-art performance on the five benchmark datasets.
Moreover, \MN is consistent across all datasets, while all other competitors have at least one weak performance. This is a good indicator of the robustness of the method to multiple classification tasks and dataset types. 
Finally, the addition of colors does not improve the accuracy for these graph classification tasks, except on the MUTAG dataset. This may come from the small dataset sizes (leading to high variances) or an inherent difficulty of these classification tasks, and contrasts with the clear improvements of the method for property testing (see \Sec{xp_graph_property}). More details on the performance of \MN w.r.t. the number of colors $k$ are available in \Appendix{exps}.
%A precise comparison of the methods on these benchmark datasets is however relatively difficult due to the large variance of most methods (probably due to the small number of graphs). 
\begin{remark}
In three out of five datasets, none of the recent state-of-the-art algorithms have statistically significantly better results than older methods (\eg WL).
%In order to provide better evidence of the improvement of \MN over previous methods, we thus provide a complementary set of experiments focusing on more \emph{structural} aspects of graphs.
We argue that, considering the high variances of all classification algorithms on classical graph datasets, 
graph property testing may be better suited to measure the expressiveness of graph representation learning algorithms in practice. %As a first attempt we propose to examine the ability of a method to capture purely structural properties of a graph, as we present in Section \ref{sec:xp_graph_property}.
\end{remark}

%It is noteworthy that the relaxation of $k$-CLIP has been used for the experimentation with $k \in \{1,2,4,8\}$. That means that even with a small number of colorings we can achieve higher or at least comparable accuracy to its competitors. %Moreover, we can observe that for PTC and MUTAG, CLIP significantly outperforms the MPNN variant, GIN, which may indicate that this classification task requires more structural information on the graphs.
%The fairly high variance of most methods on MUTAG and PTC are probably due to the small dataset size.\\

% Nevertheless, the overall conclusion of those experiments is that none of the very recent state-of-the-art proposed algorithms have statistically significantly better results than \emph{e.g.} the 2011 \textbf{WL} algorithm (\cite{shervashidze2011weisfeiler}).
% Based on those results one may wonder how to measure the expressiveness of graph representation learning algorithms in practice. As a first attempt we propose to look at more graph structural properties.

\subsection{Graph property testing}
\label{sec:xp_graph_property}
%Experiments on synthetic dataset
%-------------------------------------------------------------

% \paragraph{Synthetic dataset experiments}
%\todo{Move to supplementary material the most important info about the datasets and the exps in order to fit}

%We now illustrate the expressiveness of \MN to show how capable our algorithm is to capture both local and global structural information. 
We now investigate the ability of \MN to identify structural graph properties, a task which was previously used to evaluate the expressivity of graph kernels and on which the Weisfeiler-Lehman subtree kernel has been shown to fail for bounded-degree graphs \citep{ijcai2018-325}. 
%We recall that a graph property is a set of graphs closed under graph isomorphism.
The performance of our algorithm is evaluated for the binary classification of  four different structural properties: 1) connectivity, 2) bipartiteness, 3) triangle-freeness, 4) circular skip links \citep{pmlr-v97-murphy19a} (see \Appendix{exps} for precise definitions of these properties)  against three competitors: a) GIN, arguably the most efficient MPNN variant yet published~\citep{DBLP:journals/corr/abs-1810-00826}, % and specifically we use the version GIN-0 
b) Ring-GNN, a permutation invariant network that uses the ring of matrix addition and multiplication \citep{DBLP:journals/corr/abs-1905-12560},
c) RP-GIN, the Graph Isomorphism Network combined with Relational Pooling, as described by \cite{pmlr-v97-murphy19a}, which is able to distinguish certain cases of non-isomorphic regular graphs. We provide all experimental details in \Appendix{exps}.

\begin{table}[ht]\centering
  \caption{Classification accuracies of the synthetic datasets. $k$-RP-GIN refers to a relational pooling averaged over $k$ random permutations. We report Ring-GNN results from \cite{DBLP:journals/corr/abs-1905-12560}.}
\label{tab:synth}
\renewcommand{\tabcolsep}{2mm}
% \vskip 0.1in
\begin{tabular}{lcccccc}
\toprule
\textbf{Property} & \textbf{Connectivity} & \textbf{Bipartiteness} & \textbf{Triangle-freeness} & \multicolumn{3}{c}{\textbf{Circular skip links}} \\
& mean $\pm$ std & mean $\pm$ std & mean $\pm$ std & mean $\pm$ std & max & min \\ \midrule \midrule
\textbf{GIN}      &     55.2 $\pm$ 4.4          &       53.1 $\pm$4.7         &      50.7$\pm$6.1 & 10.0 $\pm$ 0.0 & 10.0 & 10.0 \\
\textbf{Ring-GNN}   & - & - & - & (?) $\pm$ 15.7 & 80.0 & 10.0 \\
\textbf{1-RP-GIN} &  66.1$\pm$5.2 & 66.0$\pm5.1$ & 63.0$\pm$3.6 &20.0 $\pm$ 7.0 & 28.6 & 10.0 \\
%\textbf{5-RP-GIN} &  73.0 & 65.0 & 67.0\\
\textbf{16-RP-GIN} &  83.3$\pm7.9$ & 64.9$\pm4.1$ & 65.7$\pm$3.3 & 37.6 $\pm$ 12.9 & 53.3 & 10.0 \\
\midrule
\textbf{0-\MN}   & 56.5 $\pm$ 4.0 & 55.4 $\pm$ 5.7 & 59.6 $\pm$ 3.8 & 10.0 $\pm$ 0.0 & 10.0 &  10.0 \\
\textbf{1-\MN}   &     73.3 $\pm$ 2.2        &       63.3 $\pm$1.9         &       63.5 $\pm$7.3 & 61.9 $\pm11.9$&80.7 &  36.7 \\
\textbf{16-\MN}   &     \textbf{99.7 $\pm$ 0.5}       &        \textbf{99.2 $\pm$ 0.9}      &        \textbf{94.2$\pm$3.4} & \textbf{90.8 $\pm$ 6.8} & 98.7 & 76.0 \\ \bottomrule
\end{tabular}
\end{table}

\Table{synth} shows that CLIP is able to capture the structural information of connectivity, bipartiteness, triangle-freeness and circular skip links, while MPNN variants fail to identify these graph properties. Furthermore, we observe that CLIP outperforms RP-GIN, that was shown to provide very expressive representations for regular graphs \citep{pmlr-v97-murphy19a}, even with a high number of permutations (the equivalent of colors in their method is set to $k=16$). Moreover, both for $k$-RP-GIN and $k$-CLIP, the increase of permutations and colorings respectively lead to higher accuracies. In particular, CLIP can capture almost perfectly the different graph properties with as little as $k=16$ colorings.
\section{Conclusion}\label{sec:conclusion}
%-------------------------------------------------------------
%\todo{we answered a question proposed in \citep{DBLP:journals/corr/abs-1810-00826} about extending MPNN in order to have more powerfull architectures}
In this paper, we showed that a simple coloring scheme can improve the expressive power of MPNNs. Using such a coloring scheme, we extended MPNNs to create \MN, the first universal graph representation. Universality was proven using the novel concept of separable neural networks, and our experiments showed that \MN is state-of-the-art on both graph classification datasets and property testing tasks. The coloring scheme is especially well suited to hard classification tasks that require complex structural information to learn.
%introduced a framework to design universal representations of complex object spaces and applied it to graphs with node attributes.
%More specifically, we provided \MethodName (\MN), an extension of the MPNN framework that we prove to be universal. We validated our approach and showed it attains state-of-the-art performance on graph classification datasets as well as on property testing.
%% \MN extends MPNN in order to
%Moreover, we illustrate in the experiments that \MN is capable of capturing structural information that MPNNs cannot capture.
%% For a complex tasks requiring to distinguish nodes with similar attributes, .
The framework is general and simple enough to extend to other data structures such as directed, weighted or labeled graphs. %, graphs with edge attributes or graphs with additional attributes at the graph level.
% A direction for future work is to specify the \MN framework to specific structures such as exchangeable matrices.
Future work includes more detailed and quantitative approximation results depending on the parameters of the architecture such as the number of colors $k$, or number of hops of the iterative neighborhood aggregation.

% FOR CAMERA READY VERSION
% Acknowledgements should only appear in the accepted version.
%\section*{Acknowledgements}

\bibliography{biblio}

\begin{thebibliography}{}

\bibitem[\protect\astroncite{Atwood and Towsley}{2016}]{atwood2016diffusion}
Atwood, J. and Towsley, D. (2016).
\newblock Diffusion-convolutional neural networks.
\newblock In {\em Advances in Neural Information Processing Systems}.

\bibitem[\protect\astroncite{Babai}{2016}]{babai2016graph}
Babai, L. (2016).
\newblock Graph isomorphism in quasipolynomial time.
\newblock In {\em Proceedings of the forty-eighth annual ACM symposium on
  Theory of Computing}, pages 684--697. ACM.

\bibitem[\protect\astroncite{Bartlett et~al.}{2018}]{bartlett2018gradient}
Bartlett, P., Helmbold, D., and Long, P. (2018).
\newblock Gradient descent with identity initialization efficiently learns
  positive definite linear transformations.
\newblock In {\em International Conference on Machine Learning}, pages
  520--529.

\bibitem[\protect\astroncite{Baum and Haussler}{1989}]{baum1989size}
Baum, E.~B. and Haussler, D. (1989).
\newblock What size net gives valid generalization?
\newblock In {\em Advances in neural information processing systems}, pages
  81--90.

\bibitem[\protect\astroncite{Bengio et~al.}{2013}]{bengio2013representation}
Bengio, Y., Courville, A., and Vincent, P. (2013).
\newblock Representation learning: A review and new perspectives.
\newblock {\em IEEE transactions on pattern analysis and machine intelligence},
  35(8):1798--1828.

\bibitem[\protect\astroncite{Bodlaender}{1990}]{bodlaender1990polynomial}
Bodlaender, H.~L. (1990).
\newblock Polynomial algorithms for graph isomorphism and chromatic index on
  partial k-trees.
\newblock {\em Journal of Algorithms}, 11(4):631--643.

\bibitem[\protect\astroncite{Bourbaki}{1998}]{bourbaki1998general}
Bourbaki, N. (1998).
\newblock {\em General Topology: Chapters 1-4}.
\newblock Number vol.~4 in Addison-Wesley series in mathematics. Springer.

\bibitem[\protect\astroncite{Bruna et~al.}{2014}]{bruna2013spectral}
Bruna, J., Zaremba, W., Szlam, A., and Lecun, Y. (2014).
\newblock Spectral networks and locally connected networks on graphs.
\newblock {\em ICLR}.

\bibitem[\protect\astroncite{Chen
  et~al.}{2019}]{DBLP:journals/corr/abs-1905-12560}
Chen, Z., Villar, S., Chen, L., and Bruna, J. (2019).
\newblock On the equivalence between graph isomorphism testing and function
  approximation with gnns.
\newblock {\em NeurIPS 2019}, abs/1905.12560.

\bibitem[\protect\astroncite{Cybenko}{1989}]{cybenko1989approximation}
Cybenko, G. (1989).
\newblock Approximation by superpositions of a sigmoidal function.
\newblock {\em Mathematics of control, signals and systems}, 2(4):303--314.

\bibitem[\protect\astroncite{Defferrard
  et~al.}{2016}]{defferrard2016convolutional}
Defferrard, M., Bresson, X., and Vandergheynst, P. (2016).
\newblock Convolutional neural networks on graphs with fast localized spectral
  filtering.
\newblock In {\em Advances in Neural Information Processing Systems}, pages
  3844--3852.

\bibitem[\protect\astroncite{Duvenaud et~al.}{2015}]{duvenaud2015convolutional}
Duvenaud, D.~K., Maclaurin, D., Iparraguirre, J., Bombarell, R., Hirzel, T.,
  Aspuru-Guzik, A., and Adams, R.~P. (2015).
\newblock Convolutional networks on graphs for learning molecular fingerprints.
\newblock In {\em Advances in neural information processing systems}, pages
  2224--2232.

\bibitem[\protect\astroncite{Erd\"{o}s and R\'{e}nyi}{1959}]{erdos1959random}
Erd\"{o}s, P. and R\'{e}nyi, A. (1959).
\newblock On random graphs i.
\newblock {\em Publicationes Mathematicae Debrecen}, 6:290.

\bibitem[\protect\astroncite{Frasconi et~al.}{1998}]{frasconi1998general}
Frasconi, P., Gori, M., and Sperduti, A. (1998).
\newblock A general framework for adaptive processing of data structures.
\newblock {\em IEEE transactions on Neural Networks}, 9(5):768--786.

\bibitem[\protect\astroncite{Geman et~al.}{1992}]{geman1992neural}
Geman, S., Bienenstock, E., and Doursat, R. (1992).
\newblock Neural networks and the bias/variance dilemma.
\newblock {\em Neural computation}, 4(1):1--58.

\bibitem[\protect\astroncite{Gilmer et~al.}{2017}]{gilmer2017neural}
Gilmer, J., Schoenholz, S.~S., Riley, P.~F., Vinyals, O., and Dahl, G.~E.
  (2017).
\newblock Neural message passing for quantum chemistry.
\newblock {\em ICML}.

\bibitem[\protect\astroncite{Gori et~al.}{2005}]{gori2005new}
Gori, M., Monfardini, G., and Scarselli, F. (2005).
\newblock A new model for learning in graph domains.
\newblock In {\em Neural Networks, 2005. IJCNN'05. Proceedings. 2005 IEEE
  International Joint Conference on}, volume~2, pages 729--734. IEEE.

\bibitem[\protect\astroncite{Grover and Leskovec}{2016}]{grover2016node2vec}
Grover, A. and Leskovec, J. (2016).
\newblock node2vec: Scalable feature learning for networks.
\newblock In {\em Proceedings of the 22nd ACM SIGKDD international conference
  on Knowledge discovery and data mining}, pages 855--864. ACM.

\bibitem[\protect\astroncite{Guttenberg
  et~al.}{2016}]{guttenberg2016permutation}
Guttenberg, N., Virgo, N., Witkowski, O., Aoki, H., and Kanai, R. (2016).
\newblock Permutation-equivariant neural networks applied to dynamics
  prediction.
\newblock {\em arXiv preprint arXiv:1612.04530}.

\bibitem[\protect\astroncite{Hamilton et~al.}{2017}]{hamilton2017inductive}
Hamilton, W., Ying, Z., and Leskovec, J. (2017).
\newblock Inductive representation learning on large graphs.
\newblock In {\em Advances in Neural Information Processing Systems}, pages
  1024--1034.

\bibitem[\protect\astroncite{Henaff et~al.}{2015}]{henaff2015deep}
Henaff, M., Bruna, J., and LeCun, Y. (2015).
\newblock Deep convolutional networks on graph-structured data.
\newblock {\em arXiv preprint arXiv:1506.05163}.

\bibitem[\protect\astroncite{Hornik}{1991}]{Hornik:1991:ACM:109691.109700}
Hornik, K. (1991).
\newblock Approximation capabilities of multilayer feedforward networks.
\newblock {\em Neural networks}, 4(2):251--257.

\bibitem[\protect\astroncite{Hornik et~al.}{1989}]{hornik1989multilayer}
Hornik, K., Stinchcombe, M., and White, H. (1989).
\newblock Multilayer feedforward networks are universal approximators.
\newblock {\em Neural networks}, 2(5):359--366.

\bibitem[\protect\astroncite{Ivanov and Burnaev}{2018}]{ivanov2018anonymous}
Ivanov, S. and Burnaev, E. (2018).
\newblock Anonymous walk embeddings.
\newblock {\em ICML}.

\bibitem[\protect\astroncite{Kipf and Welling}{2017}]{kipf2016semi}
Kipf, T.~N. and Welling, M. (2017).
\newblock Semi-supervised classification with graph convolutional networks.
\newblock {\em ICLR}.

\bibitem[\protect\astroncite{Kriege et~al.}{2018}]{ijcai2018-325}
Kriege, N.~M., Morris, C., Rey, A., and Sohler, C. (2018).
\newblock A property testing framework for the theoretical expressivity of
  graph kernels.
\newblock In {\em Proceedings of the Twenty-Seventh International Joint
  Conference on Artificial Intelligence, {IJCAI-18}}, pages 2348--2354.
  International Joint Conferences on Artificial Intelligence Organization.

\bibitem[\protect\astroncite{Krizhevsky
  et~al.}{2012}]{Krizhevsky:2012:ICD:2999134.2999257}
Krizhevsky, A., Sutskever, I., and Hinton, G.~E. (2012).
\newblock Imagenet classification with deep convolutional neural networks.
\newblock In {\em Advances in Neural Information Processing Systems}, pages
  1097--1105.

\bibitem[\protect\astroncite{Lei et~al.}{2017}]{lei2017deriving}
Lei, T., Jin, W., Barzilay, R., and Jaakkola, T. (2017).
\newblock Deriving neural architectures from sequence and graph kernels.
\newblock {\em ICML}.

\bibitem[\protect\astroncite{Luks}{1982}]{LUKS198242}
Luks, E.~M. (1982).
\newblock Isomorphism of graphs of bounded valence can be tested in polynomial
  time.
\newblock {\em Journal of Computer and System Sciences}, 25(1):42 -- 65.

\bibitem[\protect\astroncite{Maron
  et~al.}{2019a}]{DBLP:journals/corr/abs-1905-11136}
Maron, H., Ben{-}Hamu, H., Serviansky, H., and Lipman, Y. (2019a).
\newblock Provably powerful graph networks.
\newblock {\em NeurIPS}.

\bibitem[\protect\astroncite{Maron et~al.}{2019b}]{maron2018invariant}
Maron, H., Ben-Hamu, H., Shamir, N., and Lipman, Y. (2019b).
\newblock Invariant and equivariant graph networks.
\newblock In {\em International Conference on Learning Representations}.

\bibitem[\protect\astroncite{Maron et~al.}{2019c}]{maron2019universality}
Maron, H., Fetaya, E., Segol, N., and Lipman, Y. (2019c).
\newblock On the universality of invariant networks.
\newblock {\em ICML 2019}.

\bibitem[\protect\astroncite{McKay}{1981}]{mckay1981practical}
McKay, B.~D. (1981).
\newblock Practical graph isomorphism.
\newblock {\em Congressus Numerantium}, 30:45--87.

\bibitem[\protect\astroncite{Mnih et~al.}{2015}]{mnih2015humanlevel}
Mnih, V., Kavukcuoglu, K., Silver, D., Rusu, A.~A., Veness, J., Bellemare,
  M.~G., Graves, A., Riedmiller, M., Fidjeland, A.~K., Ostrovski, G., Petersen,
  S., Beattie, C., Sadik, A., Antonoglou, I., King, H., Kumaran, D., Wierstra,
  D., Legg, S., and Hassabis, D. (2015).
\newblock Human-level control through deep reinforcement learning.
\newblock {\em Nature}, 518(7540):529--533.

\bibitem[\protect\astroncite{Murphy et~al.}{2019}]{pmlr-v97-murphy19a}
Murphy, R., Srinivasan, B., Rao, V., and Ribeiro, B. (2019).
\newblock Relational pooling for graph representations.
\newblock In Chaudhuri, K. and Salakhutdinov, R., editors, {\em Proceedings of
  the 36th International Conference on Machine Learning}, volume~97 of {\em
  Proceedings of Machine Learning Research}, pages 4663--4673, Long Beach,
  California, USA. PMLR.

\bibitem[\protect\astroncite{Niepert et~al.}{2016}]{niepert2016learning}
Niepert, M., Ahmed, M., and Kutzkov, K. (2016).
\newblock Learning convolutional neural networks for graphs.
\newblock In {\em International conference on machine learning}.

\bibitem[\protect\astroncite{Perozzi et~al.}{2014}]{perozzi2014deepwalk}
Perozzi, B., Al-Rfou, R., and Skiena, S. (2014).
\newblock Deepwalk: Online learning of social representations.
\newblock In {\em Proceedings of the 20th ACM SIGKDD international conference
  on Knowledge discovery and data mining}, pages 701--710. ACM.

\bibitem[\protect\astroncite{Pinkus}{1999}]{pinkus1999approximation}
Pinkus, A. (1999).
\newblock Approximation theory of the mlp model in neural networks.
\newblock {\em Acta numerica}, 8:143--195.

\bibitem[\protect\astroncite{Qi et~al.}{2017}]{DBLP:journals/corr/QiSMG16}
Qi, C.~R., Su, H., Mo, K., and Guibas, L.~J. (2017).
\newblock Pointnet: Deep learning on point sets for 3d classification and
  segmentation.
\newblock {\em CVPR}.

\bibitem[\protect\astroncite{Rudin}{1987}]{Rudin:1987:RCA:26851}
Rudin, W. (1987).
\newblock {\em Real and Complex Analysis, 3rd Ed.}
\newblock McGraw-Hill, Inc., New York, NY, USA.

\bibitem[\protect\astroncite{Saxe et~al.}{2014}]{saxe2013exact}
Saxe, A.~M., McClelland, J.~L., and Ganguli, S. (2014).
\newblock Exact solutions to the nonlinear dynamics of learning in deep linear
  neural networks.
\newblock {\em ICLR}.

\bibitem[\protect\astroncite{Scarselli et~al.}{2009}]{scarselli2009graph}
Scarselli, F., Gori, M., Tsoi, A.~C., Hagenbuchner, M., and Monfardini, G.
  (2009).
\newblock The graph neural network model.
\newblock {\em IEEE Transactions on Neural Networks}, 20(1):61--80.

\bibitem[\protect\astroncite{Shervashidze
  et~al.}{2011}]{shervashidze2011weisfeiler}
Shervashidze, N., Schweitzer, P., Leeuwen, E. J.~v., Mehlhorn, K., and
  Borgwardt, K.~M. (2011).
\newblock Weisfeiler-lehman graph kernels.
\newblock {\em Journal of Machine Learning Research}.

\bibitem[\protect\astroncite{Sperduti and
  Starita}{1997}]{sperduti1997supervised}
Sperduti, A. and Starita, A. (1997).
\newblock Supervised neural networks for the classification of structures.
\newblock {\em IEEE Transactions on Neural Networks}, 8(3):714--735.

\bibitem[\protect\astroncite{Szegedy et~al.}{2016}]{szegedy2016rethinking}
Szegedy, C., Vanhoucke, V., Ioffe, S., Shlens, J., and Wojna, Z. (2016).
\newblock Rethinking the inception architecture for computer vision.
\newblock In {\em Proceedings of the IEEE conference on computer vision and
  pattern recognition}, pages 2818--2826.

\bibitem[\protect\astroncite{Vaswani et~al.}{2017}]{NIPS2017_7181}
Vaswani, A., Shazeer, N., Parmar, N., Uszkoreit, J., Jones, L., Gomez, A.~N.,
  Kaiser, {\uppercase{\l{}}}., and Polosukhin, I. (2017).
\newblock Attention is all you need.
\newblock In {\em {A}dvances in {N}eural {I}nformation {P}rocessing {S}ystems},
  pages 5998--6008. Curran Associates, Inc.

\bibitem[\protect\astroncite{Velickovic et~al.}{2018}]{velickovic2017graph}
Velickovic, P., Cucurull, G., Casanova, A., Romero, A., Lio, P., and Bengio, Y.
  (2018).
\newblock Graph attention networks.
\newblock {\em ICLR}.

\bibitem[\protect\astroncite{Verma and Zhang}{2019}]{verma2018graph}
Verma, S. and Zhang, Z.-L. (2019).
\newblock Graph capsule convolutional neural networks.
\newblock {\em ICLR}.

\bibitem[\protect\astroncite{Weisfeiler and
  Lehman}{1968}]{weisfeiler1968reduction}
Weisfeiler, B. and Lehman, A. (1968).
\newblock A reduction of a graph to a canonical form and an algebra arising
  during this reduction.
\newblock {\em Nauchno-Technicheskaya Informatsia}, 2(9):12--16.

\bibitem[\protect\astroncite{Xu
  et~al.}{2019}]{DBLP:journals/corr/abs-1810-00826}
Xu, K., Hu, W., Leskovec, J., and Jegelka, S. (2019).
\newblock How powerful are graph neural networks?
\newblock {\em ICLR}.

\bibitem[\protect\astroncite{Ying et~al.}{2018}]{ying2018hierarchical}
Ying, R., You, J., Morris, C., Ren, X., Hamilton, W.~L., and Leskovec, J.
  (2018).
\newblock Hierarchical graph representation learning with differentiable
  pooling.
\newblock In {\em Advances in Neural Information Processing Systems}, pages
  4805--4815.

\bibitem[\protect\astroncite{Zaheer
  et~al.}{2017}]{DBLP:journals/corr/ZaheerKRPSS17}
Zaheer, M., Kottur, S., Ravanbakhsh, S., P{\'{o}}czos, B., Salakhutdinov, R.,
  and Smola, A.~J. (2017).
\newblock Deep sets.
\newblock In {\em Advances in Neural Information Processing Systems}.

\bibitem[\protect\astroncite{Zhang et~al.}{2018}]{zhang2018end}
Zhang, M., Cui, Z., Neumann, M., and Chen, Y. (2018).
\newblock An end-to-end deep learning architecture for graph classification.
\newblock In {\em Proceedings of AAAI Conference on Artificial Inteligence}.

\end{thebibliography}
\bibliographystyle{apa}

% \documentclass{article}
% \input{definitions.tex}

% % Recommended, but optional, packages for figures and better typesetting:
% \usepackage{microtype}
% \usepackage{graphicx}
% \usepackage{subfigure}
% \usepackage{booktabs} % for professional tables
% \usepackage{enumitem}

% % hyperref makes hyperlinks in the resulting PDF.
% % If your build breaks (sometimes temporarily if a hyperlink spans a page)
% % please comment out the following usepackage line and replace
% % \usepackage{icml2018} with \usepackage[nohyperref]{icml2018} above.
% \usepackage{hyperref}
% \usepackage{bibentry}

% % Attempt to make hyperref and algorithmic work together better:
% \newcommand{\theHalgorithm}{\arabic{algorithm}}
% \newcommand{\ts}{\textsuperscript}

% % Use the following line for the initial blind version submitted for review:
% \usepackage{iclr2020_conference,times}
% \usepackage{xr-hyper}
% \myexternaldocument{ICLR_graph_representations}

% % If accepted, instead use the following line for the camera-ready submission:
% % \usepackage[accepted]{icml2018}

% \begin{document}
\appendix

\section{Proofs of the universality of separable neural networks}\label{appendix:separability}
%=============================================================
\begin{proof}[Proof of \Theorem{subringpsi}]
The proof relies on the Stone-Weierstrass theorem we recall below.
We refer to \citep[Theorem 7.32]{Rudin:1987:RCA:26851} for a detailed proof of
the following classical theorem.
\begin{theorem}[Stone-Weierstrass]
  \label{th:stone.weierstrass}
  Let $\mathcal{A}$ be an algebra of real functions on a compact Hausdorff set $K$. If
  $\mathcal{A}$ separates points of $K$ and contains a non-zero constant
  function, then $\mathcal{A}$ is uniformly dense in $\Co(K, \R)$.
\end{theorem}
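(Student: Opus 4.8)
The plan is to prove the theorem by the standard lattice route, using the hypotheses only through a few soft consequences. First I would pass to the uniform closure $\mathcal{B} = \overline{\mathcal{A}} \subseteq \Co(K,\R)$. Since uniform limits commute with sums and products, $\mathcal{B}$ is again an algebra; it still separates points, and it still contains every constant (from the single non-zero constant in $\mathcal{A}$, closure under scalar multiplication yields all of them). Because $\mathcal{B}$ is closed by construction, it suffices to prove $\mathcal{B} = \Co(K,\R)$, and density of $\mathcal{A}$ is then immediate.

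The first key step is to show $\mathcal{B}$ is closed under absolute value: if $f \in \mathcal{B}$ then $|f| \in \mathcal{B}$. For this I would set $a = \|f\|_\infty$ and approximate $t \mapsto |t|$ uniformly on $[-a,a]$ by polynomials $p_n$. Since $\mathcal{B}$ is an algebra containing the constants, each $p_n \circ f$ lies in $\mathcal{B}$, and $p_n \circ f \to |f|$ uniformly, so $|f| \in \mathcal{B}$. From the identities $\max(f,g) = \frac{1}{2}(f+g+|f-g|)$ and $\min(f,g) = \frac{1}{2}(f+g-|f-g|)$ it follows that $\mathcal{B}$ is a sublattice of $\Co(K,\R)$, closed under pointwise $\max$ and $\min$.

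The second ingredient is two-point interpolation, which is exactly where separation enters: for distinct $x,y \in K$ and any reals $\alpha,\beta$ there is $h \in \mathcal{B}$ with $h(x)=\alpha$ and $h(y)=\beta$, obtained by choosing $g \in \mathcal{A}$ with $g(x)\ne g(y)$ and solving the resulting $2\times 2$ linear system using $g$ and the constants. The final step glues these interpolants together via compactness. Given $f \in \Co(K,\R)$ and $\epsilon>0$, pick interpolants $h_{x,y}$ matching $f$ at $x$ and $y$; fixing $x$ and covering $K$ by finitely many open sets $\{h_{x,y_i} < f+\epsilon\}$, the function $h_x = \min_i h_{x,y_i} \in \mathcal{B}$ satisfies $h_x < f+\epsilon$ everywhere and $h_x(x)=f(x)$. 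Covering $K$ by finitely many sets $\{h_{x_j} > f-\epsilon\}$ and taking $h = \max_j h_{x_j} \in \mathcal{B}$ then gives $f-\epsilon < h < f+\epsilon$, i.e. $\|f-h\|_\infty \le \epsilon$. As $\mathcal{B}$ is closed, $f \in \mathcal{B}$, completing the argument.

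The main obstacle is the first step, establishing $|f| \in \mathcal{B}$, because it secretly requires genuine analysis (uniform polynomial approximation of $|t|$) rather than pure algebra, while everything downstream is a soft lattice-plus-compactness argument. To avoid circularity I would produce the approximating polynomials by an independent one-variable construction — for instance the monotone recursion $p_{0}=0$, $p_{n+1}(s)=p_n(s)+\frac{1}{2}(s-p_n(s)^2)$ converging uniformly to $\sqrt{s}$ on $[0,1]$, so that $|t|=\sqrt{t^2}$ — being mildly careful about constant terms, which is harmless here since $\mathcal{B}$ contains all constants.
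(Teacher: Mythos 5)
You should note first that the paper itself offers no proof of this statement: it is quoted as a classical theorem, with the reader referred to \citep[Theorem 7.32]{Rudin:1987:RCA:26851} for the argument. Your proof is correct and is essentially the classical lattice-theoretic proof given in that reference: pass to the uniform closure $\mathcal{B}$ (still an algebra, since products of uniformly convergent sequences of bounded functions on a compact set converge uniformly), establish closure under absolute value by polynomial approximation of $t \mapsto |t|$, deduce that $\mathcal{B}$ is a lattice via $\max(f,g)=\frac{1}{2}(f+g+|f-g|)$ and $\min(f,g)=\frac{1}{2}(f+g-|f-g|)$, use separation together with the constants for two-point interpolation, and conclude with the two-stage compactness gluing. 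The one place where you depart from the textbook presentation is a good one: rather than invoking the Weierstrass approximation theorem as a black box to handle $|t|$ (not actually circular in Rudin, where it is proved independently beforehand, but an extra import), you construct the approximating polynomials directly from the monotone recursion $p_0=0$, $p_{n+1}(s)=p_n(s)+\frac{1}{2}\bigl(s-p_n(s)^2\bigr)$, which increases to $\sqrt{s}$ on $[0,1]$ and converges uniformly by Dini's theorem, giving $|t|=\sqrt{t^2}$ after rescaling; this keeps the proof self-contained. Two trivial details you could make explicit: the degenerate case where $K$ is a single point (then $f$ is a constant, already in $\mathcal{B}$), and, in the covering step, that the base point $x$ lies in every set $\{h_{x,y}<f+\epsilon\}$ because $h_{x,y}(x)=f(x)$, so the finite subcovers do cover all of $K$. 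Neither affects the validity of the argument.
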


We verify that under the assumptions of \Theorem{subringpsi} the
Stone-Weierstrass theorem applies. In this setting, we first prove the theorem
for $m=1$ and use induction for the general case.

Let $K\subset\X$ be a compact subset of $\X$. We will denote
\begin{align*}
  \mathcal{A}_0 = \{\psi\circ f~:~\exists d\geq 1 \textrm{ \st }
\psi \in \Co(\R^d, \R), f \in \fun\}\,,
\end{align*}
and will proceed in two steps: we first show that $\mathcal{A}_0$ is uniformly dense in $\Co(K, \R)$, then that $\mathcal{A}$ is dense in $\mathcal{A}_0$, hence proving \Theorem{subringpsi}.
\begin{lemma}
  \label{lem:subalgebra}
  $\mathcal{A}_0$ is a subalgebra of $\Co(K, \R)$.
\end{lemma}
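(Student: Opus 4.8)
The plan is to check directly the three defining properties of a subalgebra of $\Co(K,\R)$: that every element of $\mathcal{A}_0$ is a genuine continuous real function on $K$, and that $\mathcal{A}_0$ is closed under addition, scalar multiplication, and pointwise multiplication. The containment $\mathcal{A}_0 \subset \Co(K,\R)$ is immediate: each $f \in \fun$ is continuous with values in some $\R^d$, and each $\psi \in \Co(\R^d, \R)$ is continuous, so the composition $\psi \circ f$ (restricted to $K$) is continuous on $K$.

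The crux of the argument is closure under the two binary operations, and the essential ingredient here is that $\fun$ is stable under concatenation (stacking of outputs), which I invoke from the hypotheses of \Theorem{subringpsi}. Concretely, given two elements $g_1 = \psi_1 \circ f_1$ and $g_2 = \psi_2 \circ f_2$ of $\mathcal{A}_0$, with $f_i \in \fun$ taking values in $\R^{d_i}$ and $\psi_i \in \Co(\R^{d_i}, \R)$, I form the concatenated map $f = (f_1, f_2)$ with values in $\R^{d_1 + d_2}$, which again belongs to $\fun$. Writing a point of $\R^{d_1+d_2}$ as $(u,v)$ with $u \in \R^{d_1}$ and $v \in \R^{d_2}$, the maps $\Sigma(u,v) = \psi_1(u) + \psi_2(v)$ and $\Pi(u,v) = \psi_1(u)\,\psi_2(v)$ are continuous on $\R^{d_1+d_2}$, so $\Sigma \circ f$ and $\Pi \circ f$ lie in $\mathcal{A}_0$; a pointwise check then gives $\Sigma \circ f = g_1 + g_2$ and $\Pi \circ f = g_1 g_2$. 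Closure under scalar multiplication is simpler still, since $\lambda\,(\psi_1 \circ f_1) = (\lambda \psi_1)\circ f_1$ and $\lambda \psi_1 \in \Co(\R^{d_1}, \R)$.

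Finally, to ensure $\mathcal{A}_0$ is nonempty and contains the constants (which is what the subsequent appeal to Stone--Weierstrass will need), I pick any $f \in \fun$ with values in $\R^d$ and observe that the constant map $\psi \equiv c$ lies in $\Co(\R^d, \R)$, so the constant function $c$ equals $\psi \circ f \in \mathcal{A}_0$. I expect the only delicate point to be the reliance on concatenation-closure of $\fun$ to produce the combined inner map $f$; everything else is routine algebraic bookkeeping once the outer functions $\Sigma$ and $\Pi$ are exhibited. Should the hypotheses only guarantee closure under a weaker stacking operation, the construction of $f$ would need to be adapted, but the verification of the algebra axioms would remain unchanged.
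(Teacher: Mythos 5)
Your proof is correct and follows essentially the same route as the paper: both arguments rely on the concatenation-closure of $\fun$ to stack the inner maps into a single $\varphi \in \fun$, then combine the outer continuous functions by sum (and analogously by product) to exhibit $g_1 + g_2$ and $g_1 g_2$ as elements of $\mathcal{A}_0$. Your write-up is in fact slightly more complete, since you spell out the product case, scalar multiplication, and the constants explicitly, where the paper simply notes these follow "similarly."
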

\begin{proof}
The subset $\mathcal{A}_0$ contains zero and all constants.
Let $f, g \in \mathcal{A}_0$ so that
\begin{align*}
  f(x) = \psi_f \circ \varphi_f(x)\,,\hspace{3ex}
  g(x) = \psi_g \circ \varphi_g(x)\,,
\end{align*}
with $\psi_f: \R^{d_f} \rightarrow \R$ and $\psi_g: \R^{d_g} \rightarrow \R$.
Consider $\psi: \R^{d_f + d_g} \rightarrow \R$ such that $\psi(a, b) =
\psi_f(a) + \psi_g(b)$. We define $\varphi(x) = (\varphi_f(x), \varphi_g(x))
\in \R^{d_f + d_g}$ and by assumption $\varphi \in \fun$.
We have
\begin{align*}
  (f + g)(x) &= \psi(\varphi_f(x), \varphi_g(x))\\
  &= \psi \circ \varphi(x)
\end{align*}
so that $f+g \in \mathcal{A}_0$ and we conclude that $\mathcal{A}_0$ is a
vectorial subspace of $\Co(K, \R)$.
We proceed similarly for the product in order to finish the proof of the lemma.
\end{proof}
Because $\fun_1$ separates the points of $\X$ by assumption, $\mathcal{A}_0$ also
separates the points of $\X$. Indeed, let $x \neq y$ two distinct points of $X$
so that $\exists f \in \fun$ such that $f(x) \neq f(y)$. There exists $g \in
\Co(\R^d, \R)$ such that $g(f(x)) \neq g(f(y))$.
From \Theorem{stone.weierstrass} we deduce that $\mathcal{A}_0$ is uniformly
dense in $\Co(K, \R)$ for all compact subsets $K \subset \X$.

Finally we state that:
\begin{lemma}
  \label{lem:density.m.1}
  For any compact subset $K \subset \X$, $\mathcal{A}$ is uniformly dense in $\mathcal{A}_0$.
\end{lemma}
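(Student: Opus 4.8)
The plan is to reduce the claim to the classical universal approximation theorem for the ``head'' maps that define $\mathcal{A}$. Recall that every element of $\mathcal{A}_0$ has the form $h = \psi \circ f$ with $f \in \fun$ and $\psi \in \Co(\R^d, \R)$ \emph{arbitrary}, whereas an element of $\mathcal{A}$ is of the same form but with $\psi$ constrained to lie in the class of neural networks. Since the feature map $f$ is kept fixed, approximating $h$ by an element of $\mathcal{A}$ reduces to approximating the single continuous function $\psi$ by a network, and only on the relevant portion of its domain, namely the image of $K$ under $f$.

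Concretely, I would first fix $h = \psi \circ f \in \mathcal{A}_0$ and $\varepsilon > 0$, and observe that since $f$ is continuous and $K$ is compact, the image $L = f(K) \subset \R^d$ is compact. Next I would invoke the universal approximation theorem to produce a network head $\hat{\psi}$, admissible in the definition of $\mathcal{A}$, such that $\sup_{z \in L} \lvert \psi(z) - \hat{\psi}(z) \rvert < \varepsilon$. Setting $\hat{h} = \hat{\psi} \circ f$, which belongs to $\mathcal{A}$ by construction, we obtain
\[
  \sup_{x \in K} \lvert h(x) - \hat{h}(x) \rvert
  = \sup_{x \in K} \lvert \psi(f(x)) - \hat{\psi}(f(x)) \rvert
  \le \sup_{z \in L} \lvert \psi(z) - \hat{\psi}(z) \rvert
  < \varepsilon ,
\]
where the middle inequality holds because $f(x) \in L$ for every $x \in K$. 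As $h \in \mathcal{A}_0$ and $\varepsilon > 0$ were arbitrary, this shows that $\mathcal{A}$ is uniformly dense in $\mathcal{A}_0$.

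The main obstacle is not the composition bookkeeping above, which is routine, but establishing the universal approximation statement used in the second step: one must know that the head networks allowed in $\mathcal{A}$ are uniformly dense in $\Co(\R^d, \R)$ over every compact set, which is exactly where the structural hypotheses on the network class (for instance, a non-polynomial activation) enter. A secondary point to check carefully is that $\hat{\psi} \circ f$ genuinely lies in $\mathcal{A}$, i.e.\ that prepending the fixed feature map $f \in \fun$ to an admissible head keeps the result inside the class; this is a closure property of the architecture that should be verified against the precise definition of $\mathcal{A}$. With the case $m = 1$ settled this way, the general $m$ follows from the announced induction, treating the output coordinates separately.
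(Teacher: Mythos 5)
Your proposal is correct and follows essentially the same route as the paper's proof: fix $h = \psi_0 \circ f \in \mathcal{A}_0$, observe that $f(K)$ is compact by continuity of $f$, invoke the universal approximation theorem for MLPs on $f(K)$ to get a network $\psi$ with $\|\psi - \psi_0\|_{f(K),\infty} \leq \varepsilon$, and pull the bound back through $f$ to conclude $\|\psi_0 \circ f - \psi \circ f\|_{K,\infty} \leq \varepsilon$. The two points you flag as requiring care (the MLP universality theorem and the fact that $\psi \circ f \in \mathcal{A}$) are exactly the ingredients the paper uses, the former being its cited MLP universality theorem and the latter holding by the definition of $\mathcal{A}$.
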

\begin{proof}
  Let $\epsilon >0$ and $h = \psi_0 \circ f \in \mathcal{A}_0$ with $f \in \fun$ and $\psi_0 \in
  \Co(\R^d, \R)$. Thanks to the continuity of $f$, the image $\tilde{K} = f(K)$ is a compact
  of $\R^d$.
  By \Theorem{universality.mlp} there exists an MLP $\psi$ such that $\|\psi -
  \psi_0\|_{\tilde{K}, \infty} \leq \epsilon$.
  We have $\psi \circ f \in \mathcal{A}$ and $\|\psi_0 \circ f - \psi
  \circ f\|_{K, \infty} \leq \epsilon$ which concludes the proof.
\end{proof}
This last lemma completes the proof in the case $m=1$.
For $m \geq 2$ consider
$\mathcal{A}_0 = \{\psi\circ f~:~\exists d\geq 1 \textrm{ \st }
\psi \in \Co(\R^d, \R^m), f \in \fun\}$ and proceed in a similar manner than
\Lemma{density.m.1} by decomposing $\psi \in \Co(\R^d, \R^m)$ as
\begin{align*}
  \psi(x) = \left(
      \begin{array}{c}
        \psi_1(x)\\
        \psi_2(x)\\
        \vdots\\
        \psi_m(x)
      \end{array}
      \right)\,,
\end{align*}
and applying \Lemma{subalgebra} for each coefficient function $\psi_i \in \Co(\R^d, \R)$.
% We conclude by invoking the universal approximation theorem for MLPs
% of~\cite{pinkus1999approximation}.
\end{proof}

\begin{proof}[Proof of \Proposition{necessary_condition}]
Assume that there exists $x,y\in\X$ \st $\forall f\in\fun_1$, $f(x) = f(y)$.
Then $K=\{x,y\}$ is a compact subset of $\X$ and let $\phi \in \Co(K, \R)$ be
such that $\phi(x) = 1$ and $\phi(y)=0$.
Thus, for all $f\in\fun_1$, $\max_{z\in\{x,y\}}\|\phi(z) - f(z)\|\geq 1/2$ which contradicts universality (see \Def{universality}).
\end{proof}

%=============================================================
%=============================================================
% OLD SECTIONS
%=============================================================
%=============================================================

%=============================================================
\section{Group action on Hausdorff spaces}\label{appendix:hausdorff}
%=============================================================
% \todo{put in intro?}
% There is quite a big literature on this topic.
% You may find a nice introduction and references in \cite{verma2018graph}.
% \Eq{graph.perm.invariant} appears as is in these works.
% The question of permutation invariance together with graph isomorphism appears
% in~\cite{ying2018hierarchical} for different reasons than us.
%
% Relational datasets are usually stored in machines by indexing its elements according to some arbitrary order. However, this order should not impact learning algorithms and it is thus important for vector representations of such structured data to be \emph{order-independent}.
%
In what follows, $\X$ is always a topological set and $G$ a group of
transformations acting on $\X$. The orbits of $\X$ under the action of $G$ are
the sets $Gx = \{g \cdot x \suchthat g \in G\}$.
Moreover, we denote as $\X/G$ the quotient space of orbits, also defined by the
equivalence relation: $x\sim y \iff \exists g\in G$ \st $x = g \cdot y$.
As stated in \Sec{graphs}, graphs with node attributes can be
defined using invariance by permutation of the labels.
We prove here that the resulting spaces are Hausdorff.

\begin{definition}[Group invariance]
  Let $G$ a group, a function $f: \X \rightarrow \Y$ is \emph{G-invariant} if
% If $\fun\subset\cup_{d\in\N^*}\F(\X,\R^d)$ is a universal representation of $\X/G$, then
\BEQ
  \forall x\in \X, \forall g\in G, f(x) = f(g\cdot x)\,.
\EEQ
\end{definition}

\begin{lemma}[{\citep[{I, \S 8.3}]{bourbaki1998general}}]
  \label{lem:relation.hausdorff}
  Let $\X$ be a Hausdorff space and $\mathcal{R}$ an equivalence relation of $\X$.
  Then $\X / \mathcal{R}$ is Hausdorff if and only if any two distinct equivalence
  classes in $\X$ are contained in disjoints saturated open subsets of $\X$.
\end{lemma}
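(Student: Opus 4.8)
The plan is to argue directly through the canonical projection $\pi \colon \X \to \X/\mathcal{R}$, using the definition of the quotient topology: a set is open in $\X/\mathcal{R}$ exactly when its preimage under $\pi$ is open in $\X$. I would rely on two elementary facts about $\pi$. First, any preimage $\pi^{-1}(S)$ is automatically saturated. Second, for a saturated set $A$ one has $\pi^{-1}(\pi(A)) = A$; combined with the quotient topology this shows that $\pi$ and $\pi^{-1}$ restrict to mutually inverse bijections between the saturated open subsets of $\X$ and the open subsets of $\X/\mathcal{R}$.

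For the forward implication, I assume $\X/\mathcal{R}$ is Hausdorff and take two distinct classes, regarded as distinct points $\pi(x) \neq \pi(y)$ of $\X/\mathcal{R}$. Separating them by disjoint open sets $\tilde U \ni \pi(x)$ and $\tilde V \ni \pi(y)$, I pull them back to $U = \pi^{-1}(\tilde U)$ and $V = \pi^{-1}(\tilde V)$. These are open by definition of the quotient topology, saturated because they are preimages, and disjoint because $\pi^{-1}$ preserves intersections and $\tilde U \cap \tilde V = \emptyset$. Each is saturated and meets the relevant class, hence contains it entirely, which furnishes the required pair of disjoint saturated open sets.

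For the converse, I take distinct points $p \neq q$ of $\X/\mathcal{R}$ with associated classes $C_1 = \pi^{-1}(p)$ and $C_2 = \pi^{-1}(q)$, and obtain from the hypothesis disjoint saturated open sets $U \supseteq C_1$ and $V \supseteq C_2$. Pushing them forward, I claim $\tilde U = \pi(U)$ and $\tilde V = \pi(V)$ separate $p$ and $q$: they are open because $U$ and $V$ are saturated open, so $\pi^{-1}(\pi(U)) = U$ and $\pi^{-1}(\pi(V)) = V$ are open; they contain $p$ and $q$ respectively; and they are disjoint.

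The main obstacle is this final disjointness, $\pi(U) \cap \pi(V) = \emptyset$, which is precisely where saturation is indispensable, since images under $\pi$ do not in general preserve disjointness. If some class $\pi(z)$ belonged to both $\pi(U)$ and $\pi(V)$, then by saturation this whole class would lie inside both $U$ and $V$, contradicting $U \cap V = \emptyset$. Once the correspondence between saturated open sets of $\X$ and open sets of $\X/\mathcal{R}$ together with this saturation-based disjointness are established, both directions follow with no further appeal to the Hausdorff property beyond its definition.
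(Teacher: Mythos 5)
Your proof is correct. Note, however, that the paper itself gives no proof of this lemma: it is quoted from Bourbaki (General Topology, I, \S 8.3) and used as a black box in the proof of \Proposition{orbit.space.hausdorff}, so there is no in-paper argument to compare against. Your argument is the standard one and is complete: the key observations are exactly the right ones, namely that preimages under $\pi$ are saturated and that $\pi^{-1}(\pi(A)) = A$ for saturated $A$, which together set up the bijection between saturated open subsets of $\X$ and open subsets of $\X/\mathcal{R}$; the forward direction is then a pullback of separating open sets, and the converse correctly isolates the one delicate point, that disjointness of $\pi(U)$ and $\pi(V)$ requires saturation (images under $\pi$ do not preserve disjointness in general). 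One incidental observation: your argument never invokes the hypothesis that $\X$ is Hausdorff, and indeed that hypothesis is not needed for this equivalence --- it is carried along in the statement only because it is the setting in which the paper applies the lemma.
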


Thanks to this lemma we prove the following proposition.
\begin{proposition}%TODO: find proper reference
  \label{prop:orbit.space.hausdorff}
  Let $G$ a finite group acting on an Hausdorff space $\X$, then the
  orbit space $\X / G$ is Hausdorff.
\end{proposition}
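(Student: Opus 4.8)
The plan is to invoke \Lemma{relation.hausdorff} with $\mathcal{R}$ taken to be the orbit equivalence relation $x \sim y \iff \exists g \in G,\, x = g\cdot y$, whose equivalence classes are exactly the orbits $Gx$. It therefore suffices to show that any two distinct orbits can be enclosed in disjoint \emph{saturated} open subsets of $\X$, where saturated means $G$-invariant (a union of orbits). Throughout I use that each $g \in G$ acts as a homeomorphism of $\X$, so that $g\cdot U$ is open whenever $U$ is.

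Fix two distinct orbits $Gx \neq Gy$; being distinct classes of a partition, they are disjoint, and since $G$ is finite they are \emph{finite} subsets of $\X$, say $Gx = \{a_1, \dots, a_n\}$ and $Gy = \{b_1, \dots, b_m\}$ with $a_i \neq b_j$ for all $i,j$. First I would separate them by ordinary (not yet saturated) disjoint open sets using only the Hausdorff property: for each pair $(i,j)$ pick disjoint open sets $U_{ij} \ni a_i$ and $V_{ij} \ni b_j$, then set $U_i = \bigcap_j U_{ij}$ and $V_j = \bigcap_i V_{ij}$, and finally $U = \bigcup_i U_i$, $V = \bigcup_j V_j$. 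A short check gives $U_i \cap V_j \subseteq U_{ij} \cap V_{ij} = \emptyset$ for every $i,j$, so $U$ and $V$ are disjoint open sets containing $Gx$ and $Gy$ respectively.

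The remaining, and main, difficulty is that $U$ and $V$ need not be saturated. To fix this I would pass to saturations by intersecting over the group, defining
\[
  \hat U = \bigcap_{g \in G} g\cdot U, \qquad \hat V = \bigcap_{g \in G} g\cdot V \,.
\]
This is precisely the step where finiteness of $G$ is essential: since $G$ is finite these are finite intersections of open sets, hence $\hat U$ and $\hat V$ are still open. I would then verify three points. They are $G$-invariant: for any $h \in G$, reindexing $g' = hg$ gives $h\cdot \hat U = \bigcap_{g} (hg)\cdot U = \hat U$, so $\hat U$ and $\hat V$ are saturated. They still contain the orbits: because $g\cdot Gx = Gx$ we have $g\cdot U \supseteq Gx$ for every $g$, hence $\hat U \supseteq Gx$, and likewise $\hat V \supseteq Gy$. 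And they remain disjoint: taking $g$ equal to the identity shows $\hat U \subseteq U$ and $\hat V \subseteq V$, so $\hat U \cap \hat V \subseteq U \cap V = \emptyset$. With $\hat U$ and $\hat V$ disjoint saturated open sets separating the two orbits, \Lemma{relation.hausdorff} yields that $\X / G$ is Hausdorff.
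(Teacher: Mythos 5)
Your proof is correct, and it diverges from the paper's argument at the crucial step in a way worth spelling out. Both proofs reduce to \Lemma{relation.hausdorff} and begin by separating the two orbits (finite and disjoint, since $G$ is finite and distinct classes are disjoint) by ordinary disjoint open sets $U \supseteq Gx$ and $V \supseteq Gy$ --- a step you prove via the pairwise-intersection construction and the paper merely asserts. The difference is how these sets are then saturated. The paper saturates by unions, taking $\tilde{U} = \pi^{-1}[\pi(U)] = \bigcup_{g \in G} g\cdot U$ and $\tilde{V} = \pi^{-1}[\pi(V)]$, open because the quotient map $\pi$ is open; you saturate by intersections, $\hat{U} = \bigcap_{g \in G} g\cdot U$ and $\hat{V} = \bigcap_{g \in G} g\cdot V$, open because $G$ is finite. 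Your version buys a genuine advantage: disjointness of the saturated sets is automatic, since $\hat{U} \subseteq U$ and $\hat{V} \subseteq V$. For union-saturations it is not: from $z \in \tilde{U} \cap \tilde{V}$ one can only conclude that the orbit $Gz$ \emph{meets} both $U$ and $V$, not that $Gz \subset U \cap V$ as the paper's proof asserts, and indeed $\tilde{U} \cap \tilde{V}$ can be nonempty for badly chosen $U,V$: let the two-element group act on $\R$ by $x \mapsto -x$, take the orbits $\{1,-1\}$ and $\{2,-2\}$, and let $U$ and $V$ be symmetric small neighborhoods of these orbits to which one adds a small interval around $5$ (for $U$) and around $-5$ (for $V$); then $U \cap V = \emptyset$ but both saturations contain points near $\pm 5$. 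So the paper's proof, as written, has a gap at exactly the point your intersection trick handles, and your argument is the standard correct repair (one could also fix the paper's version by first shrinking $U$ and $V$, which amounts to the same intersection). The only hypothesis both proofs use implicitly is that $G$ acts by homeomorphisms, which you at least state explicitly.
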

\begin{proof}
  Let $Gx$ and $Gy$ two distinct classes with disjoint open neighbourhood $U$
  and $V$.
  By finiteness of $G$, the application $\pi: \X \rightarrow \X / G$ is open,
  hence the saturated sets $\tilde{U} = \pi^{-1}[\pi(U)]$ and $\tilde{V} =
  \pi^{-1}[\pi(V)]$ are open.
  Suppose that there exists $z \in \tilde{U} \cap \tilde{V}$, then $\pi(z) \in
  \pi(U) \cap \pi(V)$ and we finally get that $Gz \subset U \cap V =
  \emptyset$.
  Therefore $\tilde{U} \cap \tilde{V}$ is empty and $\X / G$ is Hausdorff by
  \Lemma{relation.hausdorff}.
\end{proof}

\Proposition{orbit.space.hausdorff} directly implies that the spaces $\Graph_m$ and $\Node_m$ are Hausdorff.

\section{Universality of the node aggregation scheme}
\label{appendix:neighbornet}
%=============================================================
We now provide more details on the aggregation and combination scheme of \MN, and show that a simple application of \Corollary{sepMLP} is sufficient to prove its universality for node neighborhoods.
%Each node neighborhood has to be transformed into a vector representation before being aggregated with all the other nodes in the neighborhood using \textsc{NeighborNet}.
%More specifically,
Each local aggregation step takes as input a couple $(x_i, \{x_j\}_{j\in\mathcal{N}_i})$ where $x_i\in\R^m$ is the representation of node $i$, and $\{x_j\}_{j\in\mathcal{N}_i}$ is the set of vector representations of the neighbors of node $i$.
In the following, we show how to use \Corollary{sepMLP} to design universal representations for node neighborhoods.
\begin{definition}\label{def:neighborhoods}
The set of node neighborhoods for $m$-dimensional node attributes is defined as
\BEQ\label{eq:set_m}
\Node_m = \R^m \times \bigcup_{n\leq n_{\max}} \left(\, \R^{n\times m}/\mathcal{P}_n\,\right)\,,
\EEQ
where the set of permutation matrices $\mathcal{P}_n$ is acting on $\R^{n\times m}$ by $P\cdot v = Pv$.
\end{definition}

% Similarly to \Eq{graph_m}, we limit the size of neighborhoods by a large integer $n_{\max}$.
The main difficulty to design universal neighborhood representations is that
the node neighborhoods of \Def{neighborhoods} are permutation invariant \wrt
neighboring node attributes, and hence require permutation invariant
representations. The graph neural network literature already contains several
deep learning architectures for permutation invariant sets \citep{guttenberg2016permutation,DBLP:journals/corr/QiSMG16,DBLP:journals/corr/ZaheerKRPSS17,DBLP:journals/corr/abs-1810-00826}, among which PointNet and DeepSet have the notable advantage of being
provably universal for sets. % for sets of fixed size for the former, and of any size for the latter. % and distributions.
% In our case, the number of neighbors is an important
% characteristic that needs to be captured, and we thus propose an architecture that is an extension of DeepSet~\citep{DBLP:journals/corr/ZaheerKRPSS17}. % by replacing the \emph{max} operator by a \emph{sum} operator.
Following \Corollary{sepMLP}, we compose a separable permutation invariant network with an MLP that will aggregate both information from the node itself and its neighborhood.
While our final architecture is similar to
Deepset~\citep{DBLP:journals/corr/ZaheerKRPSS17}, this section emphasizes that the
general universality theorems of \Sec{universality} are easily applicable in
many settings including permutation invariant networks.
% We give here a full presentation of \emph{NeighborNet} within our framework:
The permutation invariant set representation used for the aggregation step of \MN is as follows:
% In order to represent neighborhood information, we consider a neural net of the form
\BEQ\label{eq:neighbornet}
  \textsc{NodeAggregation}(x, S) = \psi \left(x,\ \sum_{y\in S} \varphi(y) \right),
\EEQ
where $\psi$ and $\varphi$ are MLPs with continuous non-polynomial activation functions and $\psi(x,y)$ denotes the result of the MLP $\psi$ applied to the concatenation of $x$ and $y$.% (see \Fig{neighbornet}).
% \begin{figure}[t]
% 	\begin{center}
% 		\includegraphics[width=0.45\linewidth]{}
% 		\caption{Diagram for \textsc{NeighborNet}, where $\phi$ and $\psi$ are MLPs.}\label{fig:neighbornet}
% 	\end{center}
% \end{figure}

\begin{theorem}
\label{th:neighbornet}
The set representation described in \Eq{neighbornet} is a universal representation of $\Node_m$.
\end{theorem}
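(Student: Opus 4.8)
The plan is to deduce the result from the separability machinery of \Corollary{sepMLP}, whose hypothesis reduces the universality of an architecture of the form $\psi\circ\Phi$ (with $\psi$ an MLP) to the single requirement that the inner feature family $\Phi$ separates the points of the input space. Here the relevant inner family is
\BEQ
  \fun = \left\{\, (x, S)\mapsto \Big(x,\ \sum_{y\in S}\varphi(y)\Big) \;:\; \varphi\in\Co(\R^m,\R^d),\ d\geq 1 \,\right\}\,,
\EEQ
that is, the pre-aggregation part of \Eq{neighbornet} before the outer MLP $\psi$. First I would check that $\fun$ fits the closure assumption behind \Theorem{subringpsi}: the concatenation of two such maps built from $\varphi_1$ and $\varphi_2$ is again of this form, with the stacked feature map $\varphi=(\varphi_1,\varphi_2)$ (any duplication of the $x$-coordinate being harmless for separation and reproducible by the outer MLP). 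It then remains to establish the separation property, after which \Corollary{sepMLP} yields the conclusion, the approximation of both $\varphi$ and $\psi$ by MLPs with non-polynomial activations being guaranteed by \Theorem{universality.mlp}.

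The core of the argument is therefore to show that $\fun$ separates the points of $\Node_m$. I would fix two distinct neighbourhoods $(x, S)\neq(x', S')$. If $x\neq x'$, the first coordinate of any map in $\fun$ already separates them, so assume $x=x'$ and $S\neq S'$ as elements of the quotient $\bigcup_{n\le n_{\max}}\R^{n\times m}/\mathcal{P}_n$, i.e.\ as \emph{multisets} of points of $\R^m$, possibly of different cardinalities. The key observation is that the aggregation $\sum_{y\in S}\varphi(y)$ is exactly the integral of $\varphi$ against the finite atomic measure $\mu_S=\sum_{y\in S}\delta_y$, and that the map $S\mapsto\mu_S$ is injective on multisets: distinct multisets give distinct measures, since $\mu_S$ records both the locations and the multiplicities of the elements of $S$ (and hence the total mass $|S|$).

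Since $S\neq S'$ yields $\mu_S\neq\mu_{S'}$, and since two distinct finite Borel measures supported on a compact subset of $\R^m$ are distinguished by their integrals against some continuous function, there exists $\varphi\in\Co(\R^m,\R)$ with
\BEQ
  \sum_{y\in S}\varphi(y) \;=\;\int \varphi\,d\mu_S \;\neq\; \int \varphi\,d\mu_{S'}\;=\;\sum_{y\in S'}\varphi(y)\,,
\EEQ
which shows that $\fun$ separates $(x,S)$ from $(x',S')$. Restricting as usual to a compact subset $K\subset\Node_m$ ensures that all multisets in play have size at most $n_{\max}$ and supports contained in a common compact set, so that $\Co(K,\R)$ separates the associated measures. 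I expect the main obstacle to be precisely this separation-of-multisets step: one must verify that passing to the permutation quotient discards only the ordering, so that the empirical measure faithfully encodes multiplicities, and that a \emph{continuous} separating $\varphi$ (not merely a Borel one) can be produced on the compact supports — a point where a Urysohn or Riesz-type argument around the symmetric difference of the two multisets makes the construction explicit. Once separation is in hand, the remaining work is the routine verification of the closure hypothesis and the direct invocation of \Corollary{sepMLP}.
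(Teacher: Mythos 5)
Your proposal is correct and follows the same overall skeleton as the paper's proof: both reduce the theorem to separability via \Corollary{sepMLP} (continuity, concatenability, and the final MLP stage being immediate), both dispose of the case $x^1 \neq x^2$ through the first coordinate, and both then exhibit a test function $\varphi$ whose sum-pooling takes different values on the two neighbourhoods. Where you genuinely differ is in how that $\varphi$ is produced. The paper constructs it explicitly and quantitatively: it takes an element of the set difference $S^1\setminus S^2$, uses MLP universality to build $\varphi$ with $\varphi \geq 1$ on $S^1\setminus S^2$ and $|\varphi|\leq \varepsilon$ elsewhere on $S^1\cup S^2$, and with $\psi(x,y)=y$ and $\varepsilon = 1/(3\max\{|S^1|,|S^2|\})$ obtains the explicit gap $\textsc{NodeAggregation}(x^1,S^1)\geq 2/3 > 1/3 \geq \textsc{NodeAggregation}(x^2,S^2)$. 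You instead view $S\mapsto \sum_{y\in S}\delta_y$ as an injective embedding of multisets into finite atomic measures and invoke the fact (Riesz uniqueness, or concretely a Urysohn bump at an atom of differing multiplicity) that distinct finite measures are separated by some continuous function, then transfer to MLPs by uniform approximation on the compact union of supports — a routine step, since the separation is a strict inequality between sums of at most $n_{\max}$ terms, so a sufficiently accurate MLP approximant still separates. The two arguments are at bottom the same idea, but your measure-theoretic phrasing buys something real: it correctly covers multisets with repeated entries (e.g.\ $S^1$ and $S^2$ with equal supports but different multiplicities, where both set differences $S^1\setminus S^2$ and $S^2\setminus S^1$ are empty), a case the paper's set-difference case analysis technically glosses over even though elements of $\Node_m$ are quotients $\R^{n\times m}/\mathcal{P}_n$, i.e.\ genuine multisets. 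Conversely, the paper's construction is more self-contained and hands you the separating MLP directly, with explicit constants, rather than through an abstract existence statement followed by an approximation-preserves-strict-inequality argument.
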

\begin{proof}
By construction, \textsc{NodeAggregation} is a continuous and concatenable representation. Moreover, its final stage is an MLP, and we thus only have to prove separability in order to use \Corollary{sepMLP} and prove universality.
Let $(x^1,S^1), (x^2,S^2) \in \Node_m$ and suppose that $(x^1,S^1) \neq (x^2,S^2)$. First, if $x^1 \neq x^2$, the final MLP $\psi$ can separate $x^1$ and $x^2$. Otherwise, $S^1\neq S^2$, and let us assume that $S^1\setminus S^2 \neq \emptyset$ (otherwise $S^2\setminus S^1 \neq \emptyset$ and the argument is identical).
Since MLPs are universal representations of $\R^m$, there exists an MLP $\varphi$ such that, $\forall s\in S^1\cup S^2$,
\begin{align*}
  \varphi(s) &\geq 1\ \text{ if } s \in S^1 \setminus S^2\,,\\
  |\varphi(s)| &\leq \varepsilon\ \text{ otherwise}\,,
\end{align*}
Taking $\psi(x, y) = y$ and $\varepsilon = 1/3\max\{|S^1|,|S^2|\}$, we have
\begin{align*}
\textsc{NodeAggregation}(x^1, S^1) &\geq 2/3\,,\\
\textsc{NodeAggregation}(x^2,S^2) &\leq 1/3\,,
\end{align*}
which proves separability and, using \Corollary{sepMLP}, the universality of the representation.
\end{proof}
%
%By Urysohn's lemma (see \eg \citep[lemma 2.12]{Rudin:1987:RCA:26851}), there exists
%$h \in \Co(\R^m, \R)$ such that, $\forall x\in S^1\cup S^2$,
%\begin{align*}
  %h(s) = \left\{
  %\BA{ll}
    %1 & \text{ if } s \in S^1 \setminus S^2\\
    %0 & \text{ otherwise}
  %\EA
%\right.\,,
%\end{align*}
%and since MLPs are univeral representations of $\R^m$, there exists an MLP $\varphi$ such that, $\forall s\in S^1\cup S^2$,
%$|\varphi(s) - h(s)| < 1/3n$.
%

%=============================================================
\section{Proof of the universality of \MN}\label{appendix:universality}
%=============================================================

\begin{proof}[Proof of \Theorem{CN_sep}]
First of all, as the activation functions of the MLPs are continuous, \MN is made of continuous and concatenable functions, and is thus also continuous and concatenable.
Second, as the node aggregation step (denoted \textsc{NodeAggregation} below) is a universal set representation (see \Appendix{neighbornet}), it is capable of approximating any continuous function. We will thus first replace this function by a continuous function $\phi$, and then show that the result still holds for $\textsc{NodeAggregation}^{(1)}$ by a simple density argument.
Let $\G^1 = (v^1,A^1)$ and $\G^2 = (v^2,A^2)$ be two distinct graphs of
respective sizes $n_1$ and $n_2$ (up to a permutation). If $n^1 \neq n^2$, then
$\psi(x) = x$ and $\phi(x) = 1$ returns the number of nodes, and hence
$x_{\G^1} = n^1 \neq n^2 = x_{\G^2}$.
Otherwise, let $V = \{v^k_i\}_{i\in\set{1,n^1},k\in\{1,2\}}$ be the set of node attributes of $\G^1$ and $\G^2$, $c^1$ be a coloring of $\G^1$, $\psi(x) = x$ and $\phi$ be a continuous function such that, $\forall x\in V$ and $S\subset V$,
\BEQ
\phi(x,S) = \sum_{i=1}^{n^1} \one\{x=(v^1_i,c^1_i)\}
\prod_{j\neq i} \one\left\{A^1_{ij} = \one\{(v^1_j,c^1_j)\in S\}\right\}\,.
\EEQ
The existence of $\phi\in\Co(\R^m,\R)$ is assured by Urysohn's lemma (see \eg \citep[lemma 2.12]{Rudin:1987:RCA:26851}). Then, $x_{\G}$ counts the number of matching neighborhoods for the best coloring, and we have $x_{\G^1} = n^1$ and $x_{\G^2}\leq n^1 - 1$.
Finally, taking $\varepsilon < 1/2n^1$ in the definition of universal representation leads to the desired result, as then, using an $\varepsilon$-approximation of $\phi$ as $\textsc{NodeAggregation}^{(1)}$, we have $x_{\G^1} > n^1 - 1/2 > x_{\G^2}$.
\end{proof}

\begin{proof}[Proof of \Theorem{1CLIP}]
Consider a continuous function $\psi:\Graph_m\rightarrow\R^d$ and a compact $K'\subset\Graph_m$. Let extend $K'$ with $K = K' \times [0, 1]^{n_{\max}}$ and we define $\phi:\Graph_{m + n_{\max}}\rightarrow\R^d$ with $\phi((v,c),A) = \psi(v,A)$ for all $c \in \Co(v, A)$.
Since $\infty$-\MN is universal there exists $f\in$ $\infty$-\MN such that, for all $((v,c),A)\in K$,
\BEQ
\|\phi((v,c),A) - f((v,c),A)\| \leq \varepsilon\,,
\EEQ
hence
\BEQ\label{eq:approx_vcA}
\|\psi(v,A) - f((v,c),A)\| \leq \varepsilon\,.
\EEQ

% Let's first consider a continuous function on the augmented node attribute graph $\phi:\Graph_{m + n_{\max}}\rightarrow\R^d$, a compact subset $K\subset\Graph_{m+n_{\max}}$ and $\varepsilon > 0$, since $\infty$-\MN is universal there exists $f\in$ $\infty$-\MN such that, for all $((v,c),A)\in K$,
% \BEQ
% \|\phi((v,c),A) - f((v,c),A)\| \leq \varepsilon\,.
% \EEQ
% Let $\psi:\Graph_m\rightarrow\R^d$ be a continuous function and $K'\subset\Graph_m$. Then, taking $\phi((v,c),A) = \psi(v,A)$ for all $c \in \Co(v, A)$ and $K=K'\times[0,1]^{n_{\max}}$, by the previous argument we have that, for all $(v,A)\in K'$, $c \in \Co(v, A)$
% \BEQ\label{eq:approx_vcA}
% \|\psi(v,A) - f((v,c),A)\| \leq \varepsilon\,.
% \EEQ

Moreover, observe that for any coloring $c\in\Co(v,A)$, $\infty$-\MN and $1$-\MN applied to $((v,c), A)$ returns the same result, as all node attributes are dissimilar (by definition of the colorings) and $\Co((v,c),A) = \emptyset$.
Finally, $1$-\MN applied to $(v,A)$ is equivalent to applying $1$-\MN to $((v,C),A)$ where $C$ is a random coloring in $\Co(v,A)$, and \Eq{approx_vcA} thus implies that any random sample of $1$-\MN is within an $\varepsilon$ error of the target function $\psi$. As a result, its expectation is also within an $\varepsilon$ error of the target function $\psi$, which proves the universality of the expectation of $1$-\MN.
\end{proof}

%=============================================================
\section{Experimental details}\label{appendix:exps}
%=============================================================

\subsection{Real-world datasets}

\Table{graph_chars} summarizes the characteristics of all benchmark graph classification datasets used in \Sec{bench_exps}. We now provide complementary information on these datasets.

\textbf{Social Network Datasets (IMDBb, IMDBm):} These datasets refer to collaborations between actors/actresses, where each graph is an ego-graph of every actor and the edges occur when the connected nodes/actors are playing in the same movie. The task is to classify the genre of the movie that the graph derives from. IMDBb is a single-class classification dataset, while IMDBm is multi-class. For both social network datasets, we used one-hot encodings of node degrees as node attribute vectors. 

\textbf{Bio-informatics Datasets (MUTAG, PROTEINS, PTC):} MUTAG consists of mutagenic aromatic and heteroaromatic nitrocompounds with 7 discrete labels. PROTEINS consists of nodes, which correspond to secondary structureelements and the edges occur when the connected nodes are neighbors in the amino-acidsequence or in 3D space. It has 3 discrete labels.  PTC consists of chemical compounds that reports the carcinogenicity for male and female rats and it has 19 discrete labels. For all bio-informatics datasets we used the node labels as node attribute vectors.

\textbf{Experimentation protocol:}
We follow the same experimental protocol as described in~\citet{DBLP:journals/corr/abs-1810-00826}, and thus report the results provided in this paper corresponding to the accuracy of our six baselines in \Table{results}.
We optimized the \MN hyperparameters by grid search according to 10-fold cross-validated accuracy means.
%we tested all possible settings and selected those that maximized the accuracy measure of the cross-validated results.
We use $2$-layer MLPs, an initial learning rate of $0.001$ and decreased the learning rate by 0.5 every 50 epochs for all possible settings.
For all datasets the hyperparameters we tested are: the number of hidden units within $\{32,64\}$, the number of colorings $c \in \{1,2,4,8\}$, the number of MPNN layers within $\{1,3,5\}$, the batch size within $\{32,64\}$, and the number of epochs, that means, we select a single epoch with the best cross-validation accuracy averaged over the 10 folds.
Note that standard deviations are fairly high for all models due to the small size of these classic datasets.

\begin{table*}[ht]\centering
\caption{Characteristics of the benchmark graph classification datasets used in \Sec{bench_exps}.}
\label{tab:graph_chars}
\renewcommand{\tabcolsep}{3mm}
% \vskip 0.1in
\begin{small}
\begin{tabular}{lccccc}\toprule
\textbf{Dataset}           & \textbf{PTC} & \textbf{IMDBb} & \textbf{IMDBm} & \textbf{PROTEINS} & \textbf{MUTAG} \\ \midrule \midrule
\textbf{\# graphs}         & 344          & 1000           & 1500           & 1113              & 188    \\ %\hdashline
\textbf{\# classes}        & 2            & 2              & 3              & 2                 & 2      \\ %\hdashline
\textbf{Avg \# nodes}      & 14.29        & 19.77          & 13.00          & 39.06             & 17.93  \\ %\hdashline
\textbf{Avg degree}        & 2.05         & 9.76           & 10.14          & 3.72              & 2.21   \\ %\hdashline
\bottomrule
\end{tabular}
\end{small}
\end{table*}

\subsubsection{\MN performances w.r.t. the number of colorings $k$}

\Table{kclip_res} summarizes the performances of \MN while increasing the number of colorings $k$.  Overall we can see a small increase in performances and a reduction of the variances when $k$ is increasing. Nevertheless we should not jump to any conclusions since none of the models are statistically significantly better than the others.

\begin{table}[ht]\centering
\caption{Ablation study: classification accuracies of $k$-\MN on benchmark datasets w.r.t $k$.}
\label{tab:kclip_res}
\renewcommand{\tabcolsep}{3mm}
% \vskip 0.1in
%\begin{small}
\begin{tabular}{lccccc}\toprule
\textbf{Dataset}  & \textbf{PTC}    & \textbf{IMDBb}   & \textbf{IMDBm}   & \textbf{PROTEINS} & \textbf{MUTAG} \\ \midrule \midrule
\textbf{0-\MN}    & 65.9$\pm$4.0    & 75.4$\pm$2.0     & 52.5$\pm$2.6     & 77.0$\pm$3.2      & 90.0$\pm$5.1 \\ \midrule
\textbf{1-\MN}    & 65.3$\pm$12.8   & 75.2$\pm$3.9     & 52.2$\pm$4.0     & 75.1$\pm$4.5      & 91.1$\pm$7.0 \\
\textbf{4-\MN}    & 65.9$\pm$5.7    & 75.8$\pm$5.0     & 51.8$\pm$2.9     & 77.1$\pm$4.4      & 92.2$\pm$7.0 \\
\textbf{8-\MN}    & 67.9$\pm$7.1    & 75.7$\pm$3.8     & 52.5$\pm$3.0     & 76.8$\pm$4.8      & 93.9$\pm$4.1 \\
\textbf{16-\MN}   & 66.5$\pm$5.4    & 76.0$\pm$2.7     & 52.5$\pm$4.5     & 76.6$\pm$2.8      & 91.7$\pm$6.0 \\
\bottomrule
\end{tabular}
%\end{small}
\end{table}

We note that on the IMDBb and PROTEINS datasets the difference between using or not a coloring scheme does not have a big impact on the performances. However, adding colors increases the performances of the algorithm on three out of five real world datasets. The property testing section (\Sec{xp_graph_property}) shows empirically that the color scheme improves the expressiveness of \MN.

\subsection{Graph property testing}

In \Sec{xp_graph_property} we evaluate the expressive power of \MN on benchmark synthetic datasets. Our goal is to show that \MN is able to distinguish basic graph properties, where classical MPNN cannot.
We considered a binary classification task and we constructed \emph{balanced} synthetic datasets\footnote{The datasets are available upon request.} for each of the examined graph properties. The 20-node graphs are generated using Erd\"os-R\'enyi model \citep{erdos1959random} (and its bipartite version for the bipartiteness) with different probabilities $p$ for edge creation.
All nodes share the same (scalar) attribute. We thus have uninformative feature vectors.

In particular, we generated datasets for different classical tasks \cite{ijcai2018-325}: 1) connectivity, 2) bipartiteness, 3) triangle-freeness, and 4) circular skip links \citep{pmlr-v97-murphy19a}. In the following, we present the generating protocol of the synthetic datasets and the experimentation setup we used for the experiments.

% The \textbf{connectivity} dataset consists in 1000 graphs of 20 nodes. 500 graphs are connected (positive samples) and 500 graphs are not (negative samples with 2 to 5 components).
% The \textbf{bipartiteness} dataset consists in 1000 graphs of 40 nodes. 500 graphs are bipartite (positive samples) and 500 are not (positive samples). We construct the negative samples by injecting edges to form odd cycles\footnote{Having an odd cycle in a graph make it non bipartite}.
% For the \textbf{triangle-freeness} dataset consists in 1000 graphs. ????? MORE DETAILS FOR THE triangle-freeness ?????

\textbf{Synthetic datasets:}\\
In every case of synthetic dataset we follow the same pattern: we generate a set of random graphs using Erd\"os-R\'enyi model, which contain a specific graph property and belong to the same class and by proper edge addition we remove this property, thus creating the second class of graphs. By this way, we assure that we do not change different structural characteristics other than the examined graph property. 
\begin{enumerate}[-]
    \item \textbf{Connectivity dataset:} this dataset consists of 1000 (20-node) graphs with 500 positive samples and 500 negative ones. The positive samples correspond to disconnected graphs with two 10-node connected components selected among randomly generated graphs with an Erd\"os-R\'enyi model probability of $p=0.5$. We constructed negative samples by adding to positive samples a random edge between the two connected components.
    
    %Each of the positive samples which denote the disconnected graphs is a disjoint union of 2 connected graphs of equal size generated by Erd\"os-R\'enyi model. For the negative samples, we chose the graphs from the positive samples and for each of them we injected an edge between two randomly selected nodes of the components. Regarding the Erd\"os-R\'enyi model, we set the probability for edge creation for  was set to $p=0.5$.
    
    \item \textbf{Bipartiteness dataset:} this dataset consists of 1000 (20-node) graphs with 500 positive samples and 500 negative ones. The positive samples correspond to bipartite graphs generated with an Erd\"os-R\'enyi (bipartite) model probability of $p=0.5$. For the negative samples (non-bipartite graphs) we chose the positive samples and for each of them we added an edge between randomly selected nodes from the same partition, in order to form odd cycles 
    \footnote{Having an odd cycle in a graph makes the graph non bipartite.}.

    %We used the bipartite version of the Erd\"os-R\'enyi model for the generations of the positive samples and we set the probability for edge creation to $p=0.5$. For the negative samples (non-bipartite graphs), we injected iteratively edges between randomly selected nodes from the same partition, in order to form odd cycles 
    
    \item \textbf{Triangle-freeness dataset:} this dataset consists of 1000 (20-node) graphs with 500 positive samples and 500 negative ones. The positive samples correspond to triangle-free graphs selected among randomly generated graphs with an Erd\"os-R\'enyi model probability of $p=0.1$. We constructed negative samples by randomly adding new edges to positive samples until it creates at least one triangle.
    
    \item \textbf{Circular skip links:} this dataset consists of 150 graphs of 41 nodes as described in \citep{pmlr-v97-murphy19a,DBLP:journals/corr/abs-1905-12560}. The Circular Skip Links graphs are undirected regular graphs with node degree 4. We denote a Circular skip link graph by $G_{n,k}$ an undirected graph of $n$ nodes, where $(i,j) \in E$ holds if and only if $|i-j| \equiv 1 \text{ or } k (\text{ mod } n)$ This is a 10-class multiclass classification task whose objective is to classify each graph according to its isomorphism class.

    %We employed the Erd\"os-R\'enyi model for generating the negative samples (without triangle-free graphs), setting the probability for edge creation to $p=0.1$. For the positive samples, we injected into the same negative samples an edge between two randomly selected nodes in order to form at least one triangle.
\end{enumerate}

\textbf{Experimentation protocol:} We evaluate the different configurations of \MN and its competitors GIN and RP-GIN based on their hyper-parameters. For the architecture implementation of the GIN, we followed the best performing architecture, presented in \cite{DBLP:journals/corr/abs-1810-00826}. In particular, we used the summation as the aggregation operator, MLPs as the combination level for the node embedding generation and the sum operator for the readout function along with its refined version of concatenated graph representations across all iterations/layers of GIN, as described in \cite{DBLP:journals/corr/abs-1810-00826}.\\
In all the tested configurations for CLIP and its competitors (GIN, RP-GIN) we fixed the number of layers of the MLPs and the learning rate: we chose $2$-layer MLPs and we used the Adam optimizer with initial learning rate of $0.001$ along with a scheduler decaying the learning rate by 0.5 every 50 epochs. Concerning the other hyper-parameters, we optimized: the number of hidden units within $\{16,32,64\}$ (except for the CSL task where we only use $16$ hidden units to be fair w.r.t. RP-GIN and Ring-GNN benchmarks),  the number of MPNN layers within $\{1,2,3,5\}$, the batch size within $\{32,64\}$, and ran the model over $400$ epochs. Regarding the RP-GIN architecture~\citep{pmlr-v97-murphy19a} we optimized the one-hot encoding dimension of the first update within $\{5,10, 15,20,25,30\}$ and the number of inference permutations within $\{1,5,16\}$. Regarding the CLIP algorithm, we optimized the number of colorings $c \in \{1,2,4,8,16\}$. We then performed a 10-fold cross validation with early stopping for the hyper-parameter optimization and we reported the best 10-fold cross-validated mean accuracy with its associated standard deviation.

% % \bibliography{biblio}
% % \bibliographystyle{plainnat}

% % \nobibliography{biblio}
% % \bibliographystyle{unsrt}
% % \end{document}

\end{document}